\newtheorem{remark}{Remark}
\newtheorem{theorem}{Theorem}
\newtheorem{lemma}{Lemma}
\newcommand{\E}{\mathbb{E}}
\newcommand{\argmax}{\operatornamewithlimits{argmax}}
\newcommand{\Var}{\operatornamewithlimits{Var}}
\newcommand{\I}{\mathbb{I}}
\title{Adaptive Algorithms for Multi-armed Bandit with Composite and Anonymous Feedback}
\author {
        Siwei Wang\textsuperscript{\rm 1},
        Haoyun Wang\textsuperscript{\rm 2},
        Longbo Huang\textsuperscript{\rm 1} \\
}
\begin{document}
\maketitle

\begin{abstract}
We study the multi-armed bandit (MAB) problem with composite and anonymous feedback. In this model, 
the reward of pulling an arm spreads over a period of time (we call this period as reward interval) and the player receives partial rewards of the action, convoluted with rewards from pulling other arms, successively. 
Existing results on this model require prior knowledge about the reward interval size as an input to their algorithms. 
In this paper, we propose adaptive algorithms for both the stochastic and the adversarial cases, without requiring any prior information about the reward interval. 
For the stochastic case, we prove that our algorithm guarantees a regret  that matches the lower bounds (in order). 
For the adversarial case, 
we propose the first algorithm to jointly handle non-oblivious adversary and unknown reward interval size. 
We also conduct simulations based on real-world dataset. The results show that  our algorithms outperform existing benchmarks. 
\end{abstract}
\section{Introduction}
\label{Intro}

The multi-armed bandit (MAB) model \cite{Berry1985Bandit,Sutton1998Reinforcement} has found wide applications in Internet services, e.g., \cite{ck2018ijcai,chapelle2015simple,chen2013combinatorial,jain2018firing,wang2018multi}, 
and attracts increasing attention. 
The classic MAB model can be described as a time-slotted game between the environment and a player. In each time slot, the player has $N$ actions (or arms) to choose from. After he makes the selection, the player receives a reward from the chosen arm immediately. The rewards can be independent random variables generated from certain unknown distributions,  known as the stochastic MAB problem \cite{Lai1985Asymptotically}, or arbitrarily chosen by the environment, called the adversarial MAB problem \cite{Auer2002The}. 
In both models, the player's goal is to maximize his expected cumulative reward during the game by choosing arms properly. To evaluate the player's performance, the concept of ``regret'',  defined as the expected gap between the player's total reward and offline optimal reward, is introduced as the evaluation metric.  


Most of existing MAB algorithms, e.g., UCB  \cite{gittins1989multi,Auer2002Finite} and EXP3 \cite{Auer2002The}, are based on the fact that feedback for the arm-pulling action can be observed precisely and immediately. 
However, in many real-world applications, it is common that \emph{arm rewards spread over an interval and are convoluted with each other.}  
As a concrete example, consider a company conducting advertisements via Internet. The effect of an advertisement, i.e., how it affects the number of clicks (reward), can often spread over the next few days after it is displayed. 
Specifically, in the next couple of days, there will be continuous clicks affected by the advertisement. 
Moreover, during this time, the company usually launches some other ads, which may also impact the number of clicks. 
As a result, the company only observes aggregated information on the reward (thus the feedback is also anonymous).
This situation also happens in medical problems. For example, recent research found that the variability of blood glucose level is the key in controlling diabetes \cite{Hirsch2005Should}. Yet, diabetes medicines do not cause sudden jumps on the blood glucose level. Instead, their effects last for a period, and the blood glucose level is often jointly affected by medicines taken within a period. 
These two features make it hard to separate effects of different medicines, as well as to estimate their effectiveness. 


To address the above difficulties, in this paper, we consider an MAB model where the reward in each time slot is a positive vector.
%
Specifically, the reward vector from pulling arm $i$ at time $t$ is $\bm{r}_i(t)=(r_{i, \tau}(t), \tau \ge 1)$, where $r_{i, \tau}(t)$ denotes the reward component in time $t+\tau$ from pulling arm $i$ at time $t$.  
In addition, at time $t$, the player cannot observe each individual $r_{i, \tau}(t)$ directly.  Instead, he observes the aggregated reward, i.e.,  
$\sum_{\tau\ge 1} r_{a(\tau), t - \tau}(\tau)$, where $a(\tau)$ represents the chosen arm at time step $\tau$.
%
%

%
Existing solution for this problem is to group time slots into rounds \cite{pikeburke2018bandits,Bianchi2018Nonstochastic}, and choose to pull  only  \emph{one} arm in each round. 
With a proper selection of the round size and other parameters, the problem can be connected to the non-anonymous setting \cite{neu2010online,joulani2013online}. 
%
However, these algorithms crucially rely on the precise knowledge of the reward interval size (or the delay of rewards). As a result, underestimating the interval size leads to no theoretical guarantees, whereas overestimation worsens the performances, since their regret bounds are positively related to this estimation. 
Because of this, there is a potential complication of estimation when the prior knowledge is inaccurate, making the algorithms sensitive and less robust. 

To deal with this challenge,
%
in this paper we remove the requirement of any prior knowledge about the reward interval size. 
This is motivated by the fact that, in practical, e.g., medical applications, such information can be unknown or hard to obtain exactly. 
To solve the problem, we propose adaptive methods with increasing round sizes, to mitigate the influence of the reward spread and convolution and improve learning. 
Note that since we do not possess information about the reward interval size, it is critical and challenging to properly choose the speed for round size increase. 
%
%
%
%
Our analysis shows that, with a proper round size increasing rate (which does not depend on the knowledge of the reward interval size), our adaptive policies always possess theoretical guarantees on regrets in both the stochastic case and the adversarial case.
Our main contributions are summarized as follows:
\begin{enumerate}
\item  
We consider the stochastic MAB model with composite and anonymous feedback, where each arm's reward  spreads over a period of time. Under this model, we propose the ARS-UCB algorithm,  which requires \emph{zero} a-prior knowledge about the reward interval size. 
We show that ARS-UCB achieves an $O(N\log T + c(d_1,d_2,N))$ regret, where $c(d_1,d_2,N)$ is a function that does not depend on $T$, and $d_1$ and $d_2$ are measures of the expectation and variance of the composite rewards, respectively. 
Our regret upper bound matches  the regret lower bound for this problem, as well as regret bounds of existing policies that require knowing the exact reward interval size. 
%

\item
We propose the ARS-EXP3 algorithm for the adversarial MAB problem with composite and anonymous feedback studied in \cite{Bianchi2018Nonstochastic}. ARS-EXP3 does not require any knowledge about the reward interval size, and works in the case where the delays are non-oblivious. 
We show that ARS-EXP3 achieves an $O((d+(N\log N)^{1\over 2})T^{2\over 3})$ regret, where $d$ is the size of the reward interval. 
To the best of our knowledge, ARS-EXP3 is the first efficient algorithm in this setting (i.e., where the delays are non-oblivious).

%

\item We conduct extensive experiments 
based on real-world datasets, to validate our theoretical findings. 
The results are consistent with our analysis, and show that our algorithms 
outperform state-of-the-art benchmarks. Thus, our adaptive policies are more robust and can be used more widely in real applications.
\end{enumerate}

\subsection{Related Works}

Stochastic MAB with delayed feedback is first proposed in \cite{joulani2013online,agarwal2011distributed,desautels2012parallelizing}. In \cite{joulani2013online}, the authors propose a BOLD framework to solve this problem. In this framework, the player only changes his decision when a feedback arrives. Then, decision making can be done the same as with non-delayed feedback. They show that the regret of BOLD can be upper bounded by $O(N(\log T + \E[d]))$, where $d$ represents the random variable of delay. 
\cite{manegueu2020stochastic} then explored the case that the delay in each time slot is not i.i.d., but depends on the chosen arm. In this setting, they proposed the PatientBandits policy, which achieves near optimal regret upper bound. 
In addition to the stochastic case, adversarial MAB with delayed feedback also attracts people's attention. This model is first studied in \cite{weinberger2002on}, where it is assumed that the player has full  feedback. The paper establishes a regret lower bound of $\Omega(\sqrt{(d+1)T\log N})$ for this model, where $d$ is a constant feedback delay. The model with bandit feedback is investigated in \cite{neu2010online,neu2014online}, where the authors used the BOLD framework \cite{joulani2013online} to obtain a regret upper bound of $O(\sqrt{(d+1)TN})$. 
%
Recently, 
\cite{zhou2019learning, thune2019nonstochastic, bistritz2019online} made more optimizations on MAB with delayed 
feedback. Since their analytical methods are used in the non-anonymous setting, they are very different and cannot be used for our purpose.

\cite{pikeburke2018bandits} extends the model to contain anonymous feedback, and gives a learning policy called ODAAF. ODAAF uses  information of the delay as inputs, including its mean and variance. This helps the algorithm to estimate the upper confidence bounds.
The regret upper bound of ODAAF is $O(N(\log T + \E[d]))$, which is the same as BOLD with non-anonymous feedback. 
\cite{garg2019stochastic} then explores the composite and anonymous feedback setting and makes some minor changes to generalize ODAAF policy. However, their algorithm still needs to use precise knowledge of the reward interval. 
%
As for regret lower bound, \cite{Vernade2010Stochastic} generalizes the  regret lower bound of classic MAB model. They show that the stochastic MAB problem with delayed feedback still has a regret lower bound $O(N\log T)$. To the best of our knowledge, there is no known regret lower bound for the MAB model with delayed and anonymous feedback. 
We thus use $O(N\log T)$ as a regret lower bound in this model to compare our results with. 

Inspired by the stochastic setting, \cite{Bianchi2018Nonstochastic} studied the adversarial MAB model with composite and anonymous feedback, and present the CLW algorithm to solve the problem. 
In their paper, the losses (or the rewards) are assumed to be oblivious, so that the environment cannot change them during the game. 
They obtain a regret upper bound $O(\sqrt{dTN})$ for the CLW algorithm, and establish a matching $\Omega(\sqrt{dTN})$ regret lower bound. 
%


\section{Stochastic MAB with Composite and Anonymous Rewards}\label{Section_Sto}

We start with the stochastic case and first introduce our model setting in Section \ref{sub_section_sto_model}. 
Then, we present our Adaptive Round-Size UCB (ARS-UCB) algorithm and its regret upper bound with a proof sketch in Section \ref{sub_section_sto_algorithm}. 
Due to space limit, we put the complete proofs in the appendix.

\subsection{Model Setting}\label{sub_section_sto_model}

We adapt the model setting in \cite{garg2019stochastic}, and allow the reward intervals to have infinite size. Specifically, in our setting, a player plays a game 
for $T$ time slots. 
%
%
In each time slot, the player chooses one arm among a set of $N$ arms $\mathcal{N}=\{1, \cdots, N\}$ to play. Each arm $i$, if played, generates an i.i.d. reward vector in $\mathbb{R}_+^\infty$, where 
$\mathbb{R}_+$ is the set of all non-negative real numbers.\footnote{We allow the effects of pulling an arm to last forever, while prior works, e.g., \cite{garg2019stochastic}, all assume a finite reward interval size.
} 
We denote $\bm{r}_{a(t)}(t) = (r_{a(t), 1}(t), r_{a(t), 2}(t), \cdots )$ the reward vector generated by pulling arm $a(t)\in\mathcal{N}$ at time $t$, where the $\tau$-th term $r_{a(t), \tau}(t)$ is the \emph{partial} reward that the player obtains from arm $a(t)$ at time $t+\tau$ after pulling it at time $t$, and without loss of generality, we assume that $||\bm{r}_{a(t)}(t)||_1 \in [0,1]$. 
We denote $D_{a(t)}$ the distribution of $\bm{r}_{a(t)}$ and $\bm{\mu}_{a(t)}\triangleq \E_{D_{a(t)}}[\bm{r}_{a(t)}]$ its mean. %
Then, at every time $t$, the player receives the \emph{aggregated} reward from all previously pulled arms, 
i.e., 
$Y(t) \triangleq \sum_{\tau \le t-1} r_{a(\tau), t-\tau}(\tau)$.
%

Under this model, the expected total reward of pulling arm $i$ is $s_i \triangleq ||\bm{\mu}_i||_1$. 
%
Without loss of generality, we assume $1\ge s_1 > s_2 \ge \cdots \ge s_N \ge 0$, and denote $\Delta_i \triangleq s_1 - s_i$ for all $i\ge 2$ the reward gap of arm $i$. Then, the cumulative regret of the player can be expressed  as  $Reg(T) \triangleq Ts_1 - \E[\sum_{t=1}^T s_{a(t)}]$. 
The goal of the player is to find an algorithm to minimize his $Reg(T)$.

\subsection{ARS-UCB Algorithm}\label{sub_section_sto_algorithm}


To explain the idea of ARS-UCB algorithm (which is presented in Algorithm \ref{Algorithm_UCB}), we first introduce some notations. We denote $N_i(t)$ the number of times the player chooses to pull arm $i$ up to time $t$, and $M_i(t)$ the cumulative observed reward (w.r.t. $Y(t)$) up to $t$ from pulling arm $i$, i.e., $N_i(t) \triangleq \sum_{\tau\le t} \I[a(\tau) = i]$ and 
$M_i(t) \triangleq \sum_{\tau\le t} \I[a(\tau) = i]Y(\tau).$ 
We also denote 
\begin{eqnarray}\label{eq:sit-def} 
\hat{s}_i(t) \triangleq {M_i(t)\over N_i(t)}, 
\end{eqnarray}
 the empirical mean of arm $i$, and define an unknown reward 
$L_i(t) \triangleq  \sum_{\tau\le t} \I[a(\tau) = i]||\bm{r}_i(\tau)||_1$, 
which is the actual cumulative gain from arm $i$ until time $t$ (note that $L_i(t)$ is different from $M_i(t)$). 
If the $L_i(t)$ value for each  arm $i$ is known, then the origin UCB policy can be directly applied with empirical mean ${L_i(t)\over N_i(t)}$'s, and achieve a regret $O(\sum_{i=2}^N {1\over \Delta_i} \log T)$. However, since the player can only observe $M_i(t)$, in order to achieve a good performance, we want to ensure that the difference between $M_i(t)$ and $L_i(t)$ is small. More precisely, as $t$ goes to infinity, we want the difference between ${M_i(t)\over N_i(t)}$ and ${L_i(t)\over N_i(t)}$ to converge to $0$. 
%
\begin{algorithm}[t]
    \centering
    \caption{Adaptive Round-Size UCB (ARS-UCB) }\label{Algorithm_UCB}
    \begin{algorithmic}[1]
    \STATE \textbf{Input: } $f$, $\alpha$.
    \STATE For each arm $i$, play it for $f(1)$ times and set $K_i = 2$.
    \WHILE{\textbf{$t<T$}}
    \STATE For all arm $i$, $u_i(t) = \min \{\hat{s}_i(t) + \sqrt{\alpha\log t\over N_i(t)}, 1\}$, where 
    $\hat{s}_i(t)$ is defined in Eq. \eqref{eq:sit-def}.
    \STATE Play arm $a(t) \in \argmax_i u_i(t)$ for $f(K_{a(t)})$ times (if there are multiple maximum $u_i(t)$, choose the arm with smallest $N_i(t)$).
    \STATE $K_{a(t)} = K_{a(t)} + 1$.
    \ENDWHILE
    \end{algorithmic}
\end{algorithm}

An intuitive approach to achieve this is to choose an increasing function $f: \mathbb{N}_+ \to \mathbb{N}_+$, where $\mathbb{N}_+$ is the set of all positive integers, and to use $f(k)$ as the number of time steps in the $k$-th round. Then, in each round, we only pull a single arm.
Figure \ref{Figure_1} shows the difference between $M_i(t)$ and $L_i(t)$ in each round. In this figure, the rewards in the blue rectangle are the feedback in $M_i(t)$, and the rewards in the red parallelogram are those in $L_i(t)$. We see that no matter how long a round is, the difference is always bounded by the two triangle parts. 
Let $K_i(t)$ be the value of $K_i$ in Algorithm \ref{Algorithm_UCB} until time $t$, and denote $F(K) \triangleq \sum_{k=1}^{K} f(k)$. Then, at the decision time slot in Algorithm \ref{Algorithm_UCB}, $N_i(t) = F(K_i(t))$ for each arm $i$.
On the other hand, Figure \ref{Figure_1} shows  that $|M_i(t) - L_i(t)| = O(K_i(t))$ for each arm $i$.
If $f$ is increasing, $K/F(K)$ will converge to $0$ as $K$ increases. Therefore, $|{M_i(t) \over N_i(t)} - {L_i(t) \over N_i(t)}| = O({K_i(t) \over F(K_i(t))}) \to 0$ as $t$ goes to infinity. As a result, the algorithm behaves like UCB after some time.  

\begin{figure}[t]
\centering
\includegraphics[width=2.2in]{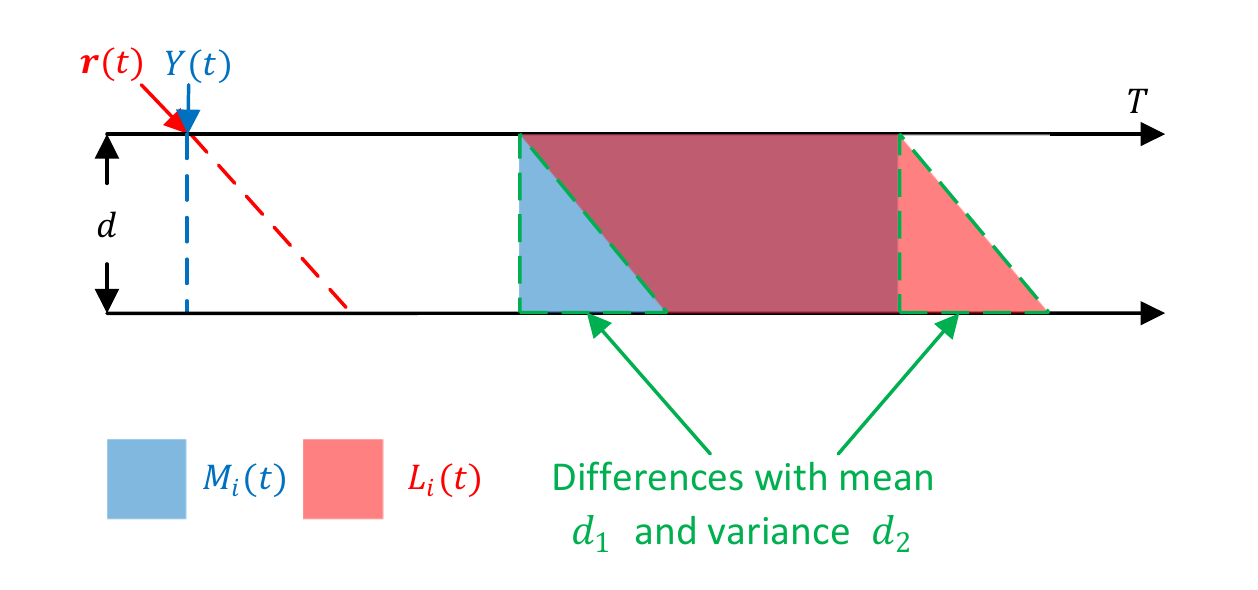}
\caption{The difference between $M_i(t)$ and $L_i(t)$ in each round. It can be bounded by the areas of the two triangles.} 
\label{Figure_1}
\end{figure}

From the above reasoning, we see that the input $f$ in Algorithm \ref{Algorithm_UCB} is introduced to control the convergence rate of ${M_i(t)-L_i(t)\over N_i(t)}$. The other input $\alpha$, used in the confidence radius, is  to control the change of likelihood of the event $\{s_i \le u_i(t), \forall i\}$. Carefully choosing $f$ and $\alpha$ is the key to ensure a good performance of the algorithm.

\begin{theorem}\label{Theorem_UCB}
Suppose $\alpha >  4$ and the function $f$ satisfies (i) $f$ is increasing, and (ii) $\exists k_0$ such that $\forall k > k_0, F(k) \ge f(k+1)$. 
Then,  ARS-UCB achieves that 
	\begin{equation}\label{eq_001}
    Reg(T) \le \sum_{i=2}^N {8\alpha\log T \over \Delta_i} + c_f^*(d_1,d_2,N,\alpha).
\end{equation} 
Here $c_f^*(d_1,d_2,N,\alpha)$ is a constant that does not depend on $T$,\footnote{The term $c_f^*(d_1,d_2,N,\alpha)$ does not depend on $T$ as long as the function $f$ does not depend on $T$, e.g., $f(k) = k^2$ or $f(k) = k^3$.} 
$d_1 \triangleq \sum_{d' = 1}^\infty \max_i \E[\sum_{\tau = d'}^{\infty} r_{i,\tau}]$, $d_2 \triangleq \sum_{d' = 1}^\infty \max_i \Var[\sum_{\tau = d'}^\infty r_{i,\tau}]$, where $r_{i,\tau}$ is the $\tau$-th term of random vector $\bm{r}_i$, and the expectation and variance are taken over the distribution $D_i$. 
\end{theorem}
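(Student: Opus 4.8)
The plan is to reduce ARS-UCB to the classical UCB analysis by controlling the discrepancy between the \emph{observed} cumulative reward $M_i(t)$ and the \emph{true} accumulated reward $L_i(t)$. First I would fix the high-probability event on which the analysis runs: for every arm $i$ and every decision slot $t$, the empirical mean of the \emph{true} partial rewards, $L_i(t)/N_i(t)$, is within $\sqrt{\alpha'\log t / N_i(t)}$ of $s_i$ for a slightly smaller constant $\alpha' < \alpha$. This is the usual UCB concentration step (a union bound over Hoeffding/sub-Gaussian tails for the i.i.d.\ reward norms $\|\bm r_i(\tau)\|_1 \in [0,1]$), and the condition $\alpha > 4$ is what makes the residual probability summable so it contributes only an $O(1)$ term to the regret. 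The new ingredient is a deterministic (or near-deterministic) bound $|M_i(t) - L_i(t)| \le g(K_i(t))$ coming from the ``two triangles'' picture in Figure~\ref{Figure_1}: within each round only one arm is pulled, so the only mismatch between what is observed during arm $i$'s rounds and what was actually generated by arm $i$ is the reward mass that leaks across round boundaries; summing the leakage over the $K_i(t)$ rounds played so far gives a bound in terms of the tail sums of $\bm\mu_i$ (for the mean part, yielding the $d_1$ dependence) plus a fluctuation term controlled by the tail variances (yielding the $d_2$ dependence), with the number of boundary-crossing events being $O(K_i(t))$.

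Next I would combine these two facts. On the good event, $u_i(t) = \min\{\hat s_i(t) + \sqrt{\alpha\log t/N_i(t)},\,1\} \ge s_i$ once $t$ is large enough that the gap $|M_i(t)-L_i(t)|/N_i(t) \le g(K_i(t))/F(K_i(t))$ is dominated by the slack $(\sqrt\alpha - \sqrt{\alpha'})\sqrt{\log t/N_i(t)}$; this is exactly where hypothesis~(i), $f$ increasing, and hypothesis~(ii), $F(k)\ge f(k+1)$ for $k>k_0$, enter — they guarantee $g(K)/F(K)\to 0$ (and in fact decays fast enough relative to $1/\sqrt{F(K)}$) so that after a constant number of rounds $\kappa_0$ (depending on $d_1,d_2,N,\alpha$ but not $T$) the UCB indices are valid upper confidence bounds for the true means. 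Condition~(ii) is also what ensures a single round is never so long that it by itself blows up the regret: the last round of a suboptimal arm has length $f(K_i(T)+1) \le F(K_i(T)) = N_i(T)$, so overlong rounds cannot contribute more than a constant factor.

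Then the standard UCB bookkeeping applies: for a suboptimal arm $i$, once $N_i(t) \ge 8\alpha\log T/\Delta_i^2$ the event $u_i(t) \ge u_1(t)$ fails on the good event, so arm $i$ stops being selected; hence the number of rounds in which $i$ is played beyond this threshold is bounded, and since $N_i(t)$ at the end of arm $i$'s play is at most $2\cdot 8\alpha\log T/\Delta_i^2$ (up to the $+O(1)$ from the initial exploration $f(1)$ and the ``one extra round'' slack from condition (ii)), the contribution of arm $i$ to the regret is $\Delta_i \cdot N_i(T) \le 8\alpha\log T/\Delta_i + O(1)$. Summing over $i=2,\dots,N$ and folding all the $T$-independent leftovers — the failure-probability mass, the initial $f(1)$ pulls per arm, the $\kappa_0$ ``warm-up'' rounds per arm during which the UCB guarantee may fail, and the rounding slack — into a single constant $c_f^*(d_1,d_2,N,\alpha)$ gives \eqref{eq_001}.

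The main obstacle I expect is the second step: making the leakage bound $|M_i(t)-L_i(t)| \le g(K_i(t))$ precise and, crucially, quantifying $g$ in terms of exactly $d_1$ and $d_2$. The mean part is clean — each round boundary costs at most a tail sum $\sum_{\tau\ge d'}\mu_{i,\tau}$ for the relevant offset $d'$, and summing gives something controlled by $d_1$ — but the reward components are random, so $M_i(t)-L_i(t)$ is not deterministic; one needs a concentration argument for the sum of these boundary contributions, and that is where the tail \emph{variances} $d_2$ show up. Getting the right dependence requires care that the number of distinct boundary offsets is handled so that $d_1$ and $d_2$ (which are sums over all offsets $d'$ of per-offset mean/variance maxima) genuinely upper-bound the accumulated leakage uniformly in how long each round is — this is the technical heart of the argument and the place where the ``infinite reward interval'' generalization over \cite{garg2019stochastic} has to be paid for.
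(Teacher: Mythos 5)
Your outline follows essentially the same route as the paper (a bias-corrected confidence radius involving $d_1,d_2$, a $T$-independent warm-up time after which the indices are valid, then standard UCB counting), but two load-bearing steps are not actually established. The first you flag yourself: the leakage $M_i(t)-L_i(t)$ is random, and the whole point of the theorem is to quantify it through exactly $d_1$ and $d_2$. The paper does not bound $|M_i(t)-L_i(t)|$ separately at all; instead it applies Bernstein's inequality directly to the observed sum $M_i(t)$, using the per-round identity (the two triangles) to show $\E[M_i(t)]\ge N_i(t)s_i-K_i(t)d_1$ and $\Var[M_i(t)]\le N_i(t)/4+2K_i(t)d_2$, which yields the hidden radius $\mathrm{rad}'_i(t)=\sqrt{4\log t/N_i(t)}+d_1K_i(t)/N_i(t)+\sqrt{12d_2K_i(t)\log t/N_i(t)^2}$ and a $1/t^3$ failure probability (Lemma \ref{Lemma_1main}). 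Your proposed split (clean Hoeffding on $L_i(t)$ plus a separate concentration of the boundary leakage) can be made to work, but as written it is a plan, not a proof, precisely at the step you call the technical heart.

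The second gap you do not acknowledge: your claim that ``after a constant number of rounds $\kappa_0$ the UCB indices are valid'' is not implied by conditions (i)--(ii) alone. The $d_2$ part of the bias requires $K_i(t)/F(K_i(t))$ to be small, i.e.\ it requires $N_i(t)$ \emph{itself} to exceed a threshold $N(d_2)$; this does not follow from $t$ being large, since a priori an arm could be starved. The paper's Lemma \ref{Lemma_2main} closes this by using the truncation $u_i(t)=\min\{\cdot,1\}$ together with the smallest-$N_i$ tie-breaking rule: once $t>\exp(N(d_2)/\alpha)$, any under-pulled arm has index $1$ and so cannot be starved, which forces every arm to reach $N(d_2)$ pulls within a time $t_2$ depending only on $(d_1,d_2,N,\alpha,f)$, and condition (ii) bounds the overshoot of the round containing $T^*$. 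Without some such argument your constant $\kappa_0$ (and hence $c_f^*$) is not justified to be $T$-independent. Finally, a small bookkeeping slip: to land on $8\alpha\log T/\Delta_i$ the stopping threshold must be $N_i(t)\le 4\alpha\log T/\Delta_i^2$ (event $2\,\mathrm{rad}_i(t)\ge\Delta_i$), with the factor $2$ coming from the last-round overshoot via condition (ii); your threshold $8\alpha\log T/\Delta_i^2$ doubled would give $16\alpha\log T/\Delta_i$, overshooting the stated bound.
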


Notice that it is not hard to find such a function $f$. For example, $f(k) =  ck^\beta $ satisfies the two properties with integers $c \ge 1$ and $\beta \ge 1$. Another example is $f(k) = 2^{k+c}$ with $c \ge 0$ (here we need to set $f(1) = 2^{2+c}$ specifically).
The value $d_1$ in the theorem can be regarded as an upper bound of the expected rewards in the triangle region (in Figure \ref{Figure_1}), and $d_2$ is an upper bound of the variance. Compared to the ODAAF policy in \cite{pikeburke2018bandits,garg2019stochastic}, 
the regret upper bound of ARS-UCB also depends on the mean and variance of the feedback delay. 
However, our algorithm has the advantage that it does not require any prior information about $d_1$ and $d_2$, whereas the ODAAF policy 
takes both $d_1$ and $d_2$ as inputs.  
Thus, ARS-UCB can be applied to settings where such information is not available. 

Another advantage of ARS-UCB is that the constant factor before the $\log T$ term in its regret upper bound is 
much smaller than 
ODAAF. 
This is because that ODAAF follows an elimination structure, and only eliminates a sub-optimal arm when its upper confidence bound is smaller than the \emph{lower} confidence bound of the optimal arm. 
On the other hand, ARS-UCB follows the basic UCB structure, in which the player always chooses the arm with largest upper confidence bound. Since in each time step, there are only tiny changes on upper confidence bounds when $t$ is large,  the upper confidence bounds of sub-optimal arms are approximately equal to the \emph{upper} confidence bound of the optimal arm in the end of the game. 
Therefore, one needs to pull each sub-optimal arm  more in ODAAF to obtain a smaller upper confidence bound (to match the \emph{lower} confidence bound of the optimal arm rather than the \emph{upper} confidence bound). As a result,  a larger regret upper bound occurs. 
%
This fact is also supported by our simulation results, i.e., ARS-UCB always outperforms ODAAF.
%

Lastly, although ARS-UCB chooses a same arm in each round, 
doing so does not cause excessive regret compared to UCB, as ARS-UCB can be viewed as grouping the plays of arms into consecutive intervals. This is also validated in our regret analysis and simulation results. 

\begin{remark}
The performance guarantees for ARS-UCB hold for any constant $\alpha > 4$ and increasing function $f$. However, to avoid a large constant in regret, choosing a function $f$ that increases faster would be better, e.g., $f(k) = k^2$. As for $\alpha$, when the delay measures $d_1,d_2$ are large and $T$ is small, a larger $\alpha$ can reduce the regret. On the other hand, when $d_1,d_2$ are small but $T$ is large, a smaller $\alpha$ behaves better. 
\end{remark}



\begin{proof}[Proof Sketch of Theorem \ref{Theorem_UCB}]

%
%
Note that in classic UCB policy, $s_i \le v_i(t) \triangleq {L_i(t) \over N_i(t)} + \sqrt{4\log t \over N_i(t)}$ with high probability. Thus we want to ensure that for large enough $t$, we have $u_i(t) \ge v_i(t)$, i.e., ${L_i(t) - M_i(t) \over N_i(t)} \le (\sqrt{\alpha} - 2)\sqrt{\log t \over N_i(t)}$ (or equivalently, ${L_i(t) - M_i(t) \over \sqrt{N_i(t)}} \le (\sqrt{\alpha} - 2)\sqrt{\log t}$). If this inequality holds, we know that $s_i \le u_i(t)$ with high probability.

As described in Figure \ref{Figure_1}, in a round $[t_1,t_2]$ such that $a(t) = i$ for all $t\in [t_1,t_2]$, the gap between $\sum_{t = t_1}^{t_2} Y(t)$ and $\sum_{t=t_1} ^{t_2} ||\bm{r}_{a(t)}(t)||$ are the two triangle terms, i.e., 
\begin{eqnarray}
\nonumber\sum_{t = t_1}^{t_2} Y(t) &=& \sum_{t=t_1} ^{t_2} ||\bm{r}_{a(t)}(t)|| + \sum_{t \le t_1-1} \sum_{\tau = t_1-t}^{\infty} r_{a(t),\tau}(t)
\label{eq_111} \\
&& - \sum_{t \le t_2}\sum_{\tau = t_2-t+1}^{\infty} r_{a(t),\tau}(t).
\end{eqnarray}

Summing over the rounds we choose arm $i$, the gap between $M_i(t)$ and $L_i(t)$ is $\Theta(K_i(t))$. 
If $f(k)$ is increasing, we know that ${K_i(t)\over\sqrt{N_i(t)} }\le \sqrt{2}$. Hence, there must be some time step $T^*=c_f^*(d_1,d_2,N,\alpha)$ such that ${L_i(t) - M_i(t) \over N_i(t)} = \Theta({K_i(t) \over N_i(t)}) \le (\sqrt{\alpha} - 2)\sqrt{\log t \over N_i(t)}$ for any $t > T^*$. This means that ARS-UCB is efficient after time step $T^*$, which results in the regret upper bound in Eq. \eqref{eq_001}.
\end{proof}

\section{Non-oblivious Adversarial MAB with Composite and Anonymous  Rewards}

We first introduce the adversarial model setting in Section \ref{sub_section_adv_model}. 
Then, we present our Adaptive Round-Size EXP3 (ARS-EXP3) algorithm for the non-oblivious case and state its regret upper bound in Section \ref{Sub-EXP3}. 
Similarly, a proof sketch is provided, and the complete proofs are referred to the appendix. 
%
%

\subsection{Model Setting}\label{sub_section_adv_model}

In the adversarial MAB model with composite and anonymous feedback, there are $N$ arms $\mathcal{N}=\{1, 2, \cdots, N\}$ and the game lasts for $T$ time steps. 
%
In each time slot $t$, the adversary gives every arm $i$ a reward vector $\bm{r}_i(t) \in \mathbb{R}_+^d$ where $d$ is some unknown constant. 
To normalize the reward, we assume that $||\bm{r}_i(t) ||_1 \le 1$. At any time slot $t$, if the player chooses to pull arm $i$,  he receives reward  $r_{i, \tau}(t)$ at time slot $t+\tau$. 
Similar to the stochastic scenario, in every time slot $t$, the player  receives an  \emph{aggregated} (hence anonymous) reward 
$Z(t) \triangleq \sum_{\tau = t-d}^{t-1} r_{a(\tau), t-\tau}(\tau)$, 
where $a(t)$ represents the chosen arm at time $t$, and $r_{a(t),t-\tau}(\tau)$ is the $(t-\tau)$-th partial reward in $\bm{r}_{a(\tau)}(\tau)$. 
Denote $G_i \triangleq \sum_{t=1}^T ||\bm{r}_i(t)||_1$. The total regret of the player is defined as $Reg(T) \triangleq \E[\max_i G_i] - \E[\sum_{t=1}^T ||\bm{r}_{a(t)}(t)||_1]$. In the following, we also assume for simplicity that  $T$ is known to the player.\footnote{If $T$ is unknown, one can use the doubling-trick method, e.g., in \cite{Lu2010,Sli2014}.}

Note that although our model is the same as the one in  \cite{Bianchi2018Nonstochastic}, we allow the delay to be non-oblivious.
Specifically, for any arm $i$ and time step $t$, the actual reward $s_i(t)$ is pre-determined (i.e., the actual rewards are oblivious). However, the adversary can choose an arbitrary reward vector $\bm{r}_i(t)\in \mathbb{R}_+^d$ based on previous observations, as long as $||\bm{r}_i(t)||_1 = s_i(t)$ (i.e., how the reward spreads over time are non-oblivious). As a result, prior works cannot be applied and it requires new algorithms and analysis. 

%

\subsection{ARS-EXP3 Algorithm} 
\label{Sub-EXP3}
Our algorithm  will similarly use an increasing round size. Given a round size function $g: \mathbb{N}_+ \to \mathbb{N}_+$, the round sizes of the game are set to be $g(1), g(2),\cdots$ Since $T$ is known, we can first compute $K$, the number of all completed rounds during the game, and use $g(K)$ as a  normalization factor.

The algorithm for this adversarial setting is called ARS-EXP3, which is shown in Algorithm \ref{Algorithm_EXP3}. In the algorithm,  the notations $N_i(t) \triangleq \sum_{\tau\le t} \I[a(\tau) = i]$, $M_i(t) \triangleq \sum_{\tau\le t} \I[a(\tau) = i]Z(t)$ and $L_i(t) \triangleq  \sum_{\tau\le t} \I[a(\tau) = i]||\bm{r}_{i}(t)||_1$ remain the same as in the stochastic case. 
In the classic EXP3 policy, the probability of choosing arm $i$ depends on $L_i(t)$ but not ${L_i(t) \over N_i(t)}$. Since the observed values are the $M_i(t)$'s, we need a bound on $|L_i(t) - M_i(t)|$. Yet, we cannot expect $|L_i(t) - M_i(t)|$ to converge to $0$ 
because it can only increase during the game. This means that we cannot use the same analysis as in the stochastic case. 

\begin{algorithm}[t]
    \centering
    \caption{Adaptive Round-Size EXP3 (ARS-EXP3)}\label{Algorithm_EXP3}
    \begin{algorithmic}[1]
    \STATE \textbf{Input: } $g$, $\gamma$, $T$, $w_1 = \cdots = w_N = 1$. 
    \STATE Compute the last round number $K$.
    \FOR {$k = 1,2,\cdots, K$}
    \STATE For any $i$, $e_i = \exp({w_i \over g(K)})$, $p_i = (1-\gamma){e_i \over \sum_i e_i} + {\gamma \over N}$.
    \STATE Draw $a(k) \sim p$, and then pull arm $a(k)$ in round $k$ (with size $g(k)$). Let $Z(k)$ be the collected rewards within this round $k$.
    \STATE $Z'(k) = \min\{Z(k), g(k)\}$, $w_{a(k)} = w_{a(k)} + {\gamma Z'(k)\over N p_{a(k)}}$.
    \ENDFOR
    \end{algorithmic}
\end{algorithm}

Our analysis will be based on the following observation: the regret of using the classic EXP3 policy under our setting can be upper bounded by the regret of using classic EXP3 policy under the classic adversarial MAB model, plus the largest difference  $\max_i |L_i(T)- M_i(T)|$.
The reason is that if we pretend to actually receive reward of $M_i(T)$ from arm $i$, then the regret will be the same as that in the classic model. However, in our model, we receive $L_i(t)$. Thus, our regret upper bound should include the difference term $\max_i |L_i(T)- M_i(T)|$. 
Similar to the stochastic case, the value $|L_i(T)- M_i(T)|$ depends on the number of rounds in the game. 
Hence, we choose an increasing function $g(\cdot)$, to ensure that the algorithm only runs $o(T)$ rounds, so that the regret is sub-linear.

\begin{theorem}\label{Theorem_EXP3}
Set  $\gamma = \min \{1, \sqrt{N\log N \over (e-1)((\beta + 1) T)^{1\over \beta + 1}}\}$ and $g(k) = k^\beta$. Then, Algorithm \ref{Algorithm_EXP3} achieves 
	\begin{equation*}Reg(T) = O((N\log N)^{1\over 2} T^{2\beta + 1\over 2\beta + 2} + dT^{1\over \beta + 1}). \end{equation*}
In particular, if $\beta = {1\over 2}$, Algorithm \ref{Algorithm_EXP3} achieves a regret upper bound $O((d + (N\log N)^{1\over 2}) T^{2\over 3})$.
\end{theorem}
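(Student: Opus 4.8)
The plan is to recast ARS-EXP3 as the classic EXP3 algorithm run over the $K$ completed rounds --- each round acting as one ``super-step'' with reward $Z'(k)/g(K)\in[0,1]$ --- and then to pay separately for the mismatch between what ARS-EXP3 observes and the true per-round gains. First I would fix the round count: with $g(k)=k^\beta$ the number $K$ of completed rounds satisfies $\sum_{k=1}^{K}k^\beta\le T<\sum_{k=1}^{K+1}k^\beta$, and comparing with $\int x^\beta\,\mathrm{d}x$ gives $K=\Theta\big(((\beta+1)T)^{1/(\beta+1)}\big)$, hence $g(K)=\Theta(T^{\beta/(\beta+1)})$ and $K\,g(K)=K^{\beta+1}\le(\beta+1)T$. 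The at most $g(K+1)=O(T^{\beta/(\beta+1)})$ slots outside completed rounds cost at most $g(K+1)$ in regret and are dropped. Writing $R_k$ for the block of round $k$ and $\ell_{k,i}\triangleq\sum_{t\in R_k}\|\bm{r}_i(t)\|_1\in[0,g(k)]$, I note that the totals $\|\bm{r}_i(t)\|_1$ are oblivious, so the matrix $(\ell_{k,i})$ is \emph{fixed}; moreover $\max_iG_i\le\max_i\sum_{k\le K}\ell_{k,i}+g(K+1)$ and the player's true reward is $\sum_{k\le K}\ell_{k,a(k)}$ up to dropped slots, so it suffices to bound $\max_i\sum_k\ell_{k,i}-\E[\sum_k\ell_{k,a(k)}]$.

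Next I would bound the observation error. Inside round $k$, $Z(k)=\sum_{t\in R_k}Z(t)$ equals $\ell_{k,a(k)}$ plus the reward mass that spills into $R_k$ from plays before the round, minus the mass from round $k$'s plays that spills past $R_k$; each of these is a partial sum of at most $d$ reward vectors of $\ell_1$-norm $\le1$ and therefore lies in $[0,d]$, and this is true for \emph{every} realization and every adaptive way the adversary spreads the rewards. Hence $|Z(k)-\ell_{k,a(k)}|\le 2d$ and $Z(k)\le g(k)+d$, so the clip $Z'(k)=\min\{Z(k),g(k)\}$ removes at most $d$ per round; summing over rounds, $\big|\sum_kZ'(k)-\sum_k\ell_{k,a(k)}\big|=O(dK)$ deterministically, and the identical computation shows that the value $\tilde\ell_k^{(j)}$ that $Z'(k)$ would take were arm $j$ played throughout round $k$ obeys $\sum_k|\tilde\ell_k^{(j)}-\ell_{k,j}|=O(dK)$ for every $j$ and every adversary.

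Then I would run the EXP3 analysis. The update $w_{a(k)}\leftarrow w_{a(k)}+\gamma Z'(k)/(Np_{a(k)})$ with $e_i=\exp(w_i/g(K))$ is exactly EXP3 on the $K$-round bandit game with chosen-arm reward $Z'(k)/g(K)\in[0,1]$; normalizing by $g(K)\ge g(k)$ is precisely what makes $\tfrac{\gamma}{N}\cdot(Z'(k)/g(K))/p_{a(k)}\le 1$, so the standard potential-function argument (using $e^x\le1+x+(e-2)x^2$, summing over rounds, taking expectations) applies and yields, with $j^\star\triangleq\argmax_i\sum_{k\le K}\ell_{k,i}$,
\[
\E\Big[\textstyle\sum_k Z'(k)\Big]\ \ge\ \E\Big[\textstyle\sum_k\tilde\ell_k^{(j^\star)}\Big]-(e-1)\gamma\,K\,g(K)-g(K)\,\frac{N\log N}{\gamma}.
\]
The non-oblivious adversary is harmless here: the conditional expectation of the importance-weighted estimate for $j^\star$ in round $k$ is exactly $\tilde\ell_k^{(j^\star)}/g(K)$, whose expected sum is within $O(dK)$ of the \emph{fixed} number $\sum_k\ell_{k,j^\star}\ge\max_iG_i-g(K+1)$, so the comparator is unambiguous. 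Using $\E[\sum_kZ'(k)]\le\E[\sum_k\ell_{k,a(k)}]$ (clipping only lowers rewards, and the round masses telescope) together with the bounds above,
\[
Reg(T)\ \le\ (e-1)\gamma\,K\,g(K)+g(K)\,\frac{N\log N}{\gamma}+g(K+1)+O(dK)\ =\ O\!\Big(\gamma T+\frac{N\log N}{\gamma}\,T^{\beta/(\beta+1)}+d\,T^{1/(\beta+1)}\Big).
\]
Balancing the $\gamma$-term $\Theta(\gamma T)$ against the $1/\gamma$-term $\Theta(T^{\beta/(\beta+1)}N\log N/\gamma)$ gives exactly $\gamma=\min\{1,\sqrt{N\log N/((e-1)((\beta+1)T)^{1/(\beta+1)})}\}$ and $\gamma T=\Theta((N\log N)^{1/2}T^{(2\beta+1)/(2\beta+2)})$, hence the stated bound; at $\beta=1/2$ both exponents equal $2/3$.

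The step I expect to be hardest is making the EXP3 guarantee rigorous against a \emph{non-oblivious} adversary, which is the distinctive feature of this model. The idea that rescues it is that the adversary only controls how each slot's pre-determined total $\|\bm{r}_i(t)\|_1$ is smeared over the following $d$ slots: consequently the comparator magnitudes $\ell_{k,i}$ (and so $\max_iG_i$) are oblivious and fixed, and every boundary effect --- spill-in, spill-out, and clipping --- is bounded by $O(d)$ per round for every realization and every adaptive move of the adversary, so the reduction to classic EXP3 incurs only the additive $O(dK)=O(dT^{1/(\beta+1)})$ term and the usual adaptive-adversary difficulties do not appear. A lower-order technical chore is verifying $K=\Theta(T^{1/(\beta+1)})$ and $K\,g(K)\le(\beta+1)T$ sharply enough that the two terms emerge with exactly the claimed exponents and that the balancing $\gamma$ reproduces the formula in the statement.
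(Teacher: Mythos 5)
Your proposal is correct and follows essentially the same route as the paper's proof: run the EXP3 potential argument over the $K$ completed rounds with the clipped rewards $Z'(k)/g(K)\in[0,1]$, show the per-round gap between observed and true arm mass (spill-in, spill-out, clipping) is $O(d)$ so that the comparator arm's total is recovered up to $O(dK)$, add $g(K+1)$ for the leftover slots, and use $K=\Theta(T^{1/(\beta+1)})$ with the stated $\gamma$ to balance the terms. Your counterfactual quantities $\tilde\ell_k^{(j)}$ are just a reformulation of the paper's conditional-expectation computation for $\hat{z}_i(k)$, and your bound $(e-1)\gamma K g(K)$ matches the paper's $(e-1)\gamma L_i(G(K))$ plus second-order term up to constants, so no substantive difference remains.
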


\begin{remark}
Note that the analysis for this case is very different from that in the stochastic case. In the stochastic case, 
as long as the error probability is smaller than ${1\over t^3}$, the round size does not influence the cumulative regret.  In the adversarial case, however, the regret is linear in the largest round size. 
Thus, we need a lower increasing speed. Theorem \ref{Theorem_EXP3}  shows that $g(k) = k^{1\over 2}$ provides a good choice. 
\end{remark}

\begin{proof}[Proof Sketch of Theorem \ref{Theorem_EXP3}] 

Let $K$ be the last completed round until time $T$. Then, there are less than $g(K+1)$ slots left, which can cause  at most $g(K+1)$ additional regret. 

Define $G(K) \triangleq \sum_{k=1}^{K} g(k)$, and consider another game lasting for $G(K)$ time steps, where pulling arm $i$ at time $t$ gives reward $R_i(t) = Z(t) = \sum_{\tau = t-d}^{t-1} r_{a(\tau), t-\tau}(\tau)$.
In this game, Algorithm \ref{Algorithm_EXP3} behaves the same as an EXP3 algorithm running $K$ time steps with largest reward $g(K)$ in each step. These  imply an $O(g(K)\sqrt{NK\log N})$ regret upper bound \cite{Auer2002The}.
Since the cumulative rewards of these two games are the same, the remaining part is the difference between the total rewards of their best arms. 

Similar to the stochastic case, for a round $[t_1,t_2]$ such that $a(t) = i$ for all $t \in [t_1,t_2]$, the following equation \eqref{eq_119} holds.
\begin{eqnarray}
\nonumber\sum_{t = t_1}^{t_2} Z(t) &=& \sum_{t=t_1} ^{t_2} ||\bm{r}_{a(t)}(t)|| + \sum_{t = t_1-d}^{t_1-1} \sum_{\tau = t_1-t}^{d} r_{a(t),\tau}(t)
\label{eq_119} \\
&& - \sum_{t =t_2-d+1}^{t_2}\sum_{\tau = t_2-t+1}^{d} r_{a(t),\tau}(t).
\end{eqnarray}

 This implies that during one round, the difference on the reward of any single arm between the two games can increase by at most $\sum_{t = t_1-d}^{t_1-1} \sum_{\tau = t_1-t}^{d} r_{a(t),\tau}(t)$, which is less than or equal to $d$.
Then, since there are totally $K$ rounds, Algorithm \ref{Algorithm_EXP3} can have an additional regret $Kd$.

Combining the three components, we obtain 
$Reg(T) = O(g(K)\sqrt{NK\log N}+ Kd + g(K+1))$.

When we set $g(k) = k^{\beta}$, then $K = \Theta(T^{1\over \beta + 1})$. Thus, the cumulative regret satisfies that $Reg(T) = O((N\log N)^{1\over 2}T^{2\beta + 1\over 2\beta + 2} + dT^{1\over \beta + 1})$.
\end{proof}


The adversary can choose the reward vectors properly to make sure that every switch between arms causes a constant bias between $M_i(t)$ and $L_i(t)$. This bias makes our observations inaccurate, and is then added to the final regret (the $Kd$ term) according to our analysis. Because of this, our model setting is similar to the non-oblivious adversarial MAB model with switching cost, in which each switch leads to an additional cost. \cite{cesa2013online,dekel2014bandits} show that the non-oblivious adversarial MAB with switching cost has a regret lower bound of $\Omega(T^{2\over 3})$. Therefore, it is reasonable that we can only obtain a similar $O(T^{2\over 3})$ regret upper bound. 



\section{Simulations} 


\subsection{The Stochastic Setting} 
We start with the stochastic case. In our experiments, there are a total of $9$ arms. The expected reward of the 9 arms follows the vector $\bm{s} = [.9,.8,.7,.6,.5,.4,.3,.2,.1]$. We conduct experiments 
on the following cases.
%

\subsubsection{Random delay} 

In this case, the reward of pulling an arm is given to the player after a random delay $z$. 
%
That is, $\forall \tau' \ne z, r_{a(t),\tau'}(t) = 0$ and $\E[r_{a(t), z}(t)] = s_{a(t)}$. We choose $z$ to be i.i.d. uniformly in $[10,30]$ (Figures \ref{Figure_2} and \ref{Figure_3}) or $[0,60]$ (Figures \ref{Figure_4} and \ref{Figure_5}). For comparison, we choose the ODAAF algorithm   proposed in \cite{pikeburke2018bandits} with accurate knowledge about the delay $z$ as benchmark. 

\begin{figure}[h]
\centering 
\subfigure[]{ \label{Figure_2} 
\includegraphics[width=1.34in]{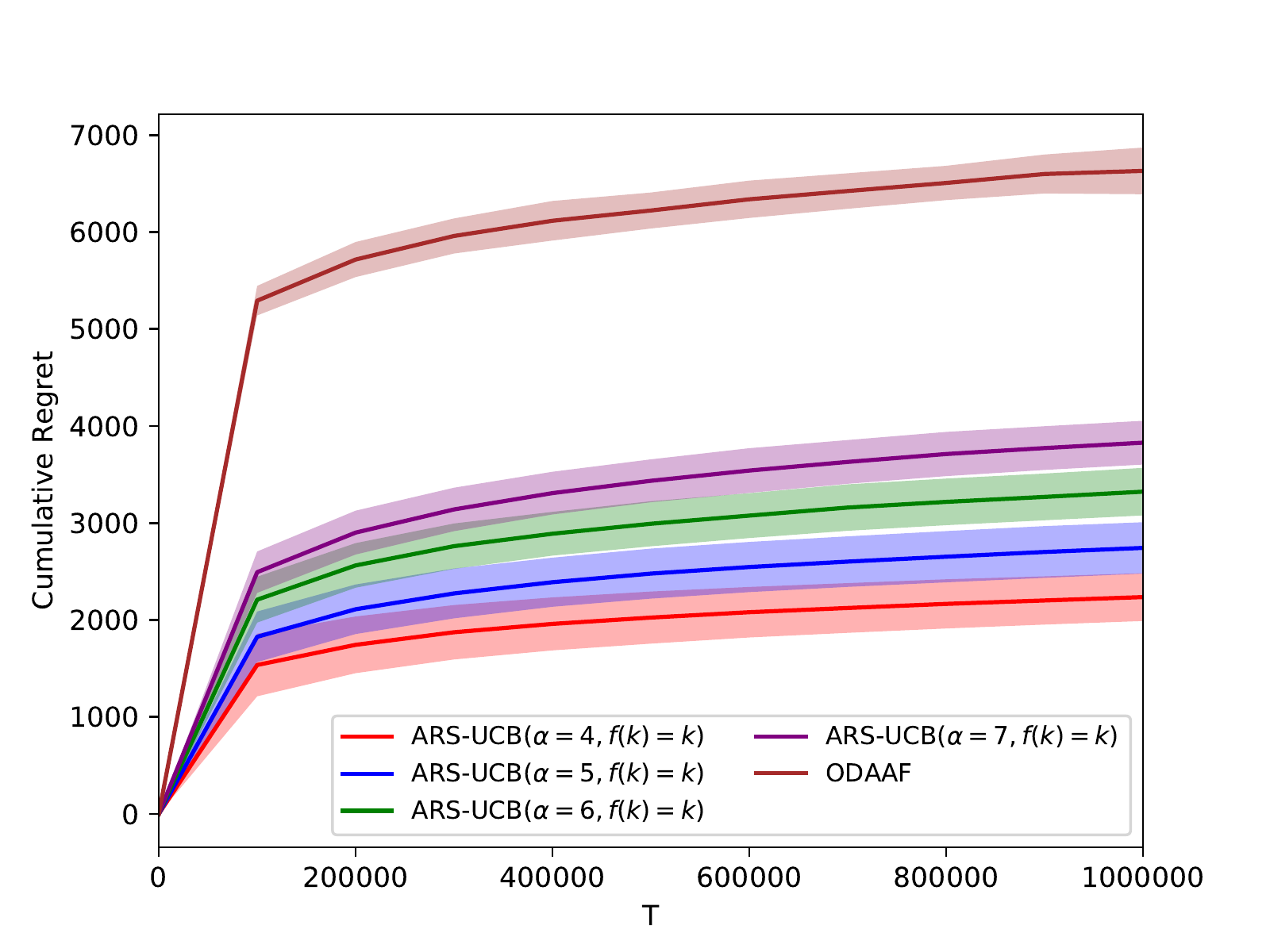}}
\subfigure[]{ \label{Figure_3} 
\includegraphics[width=1.34in]{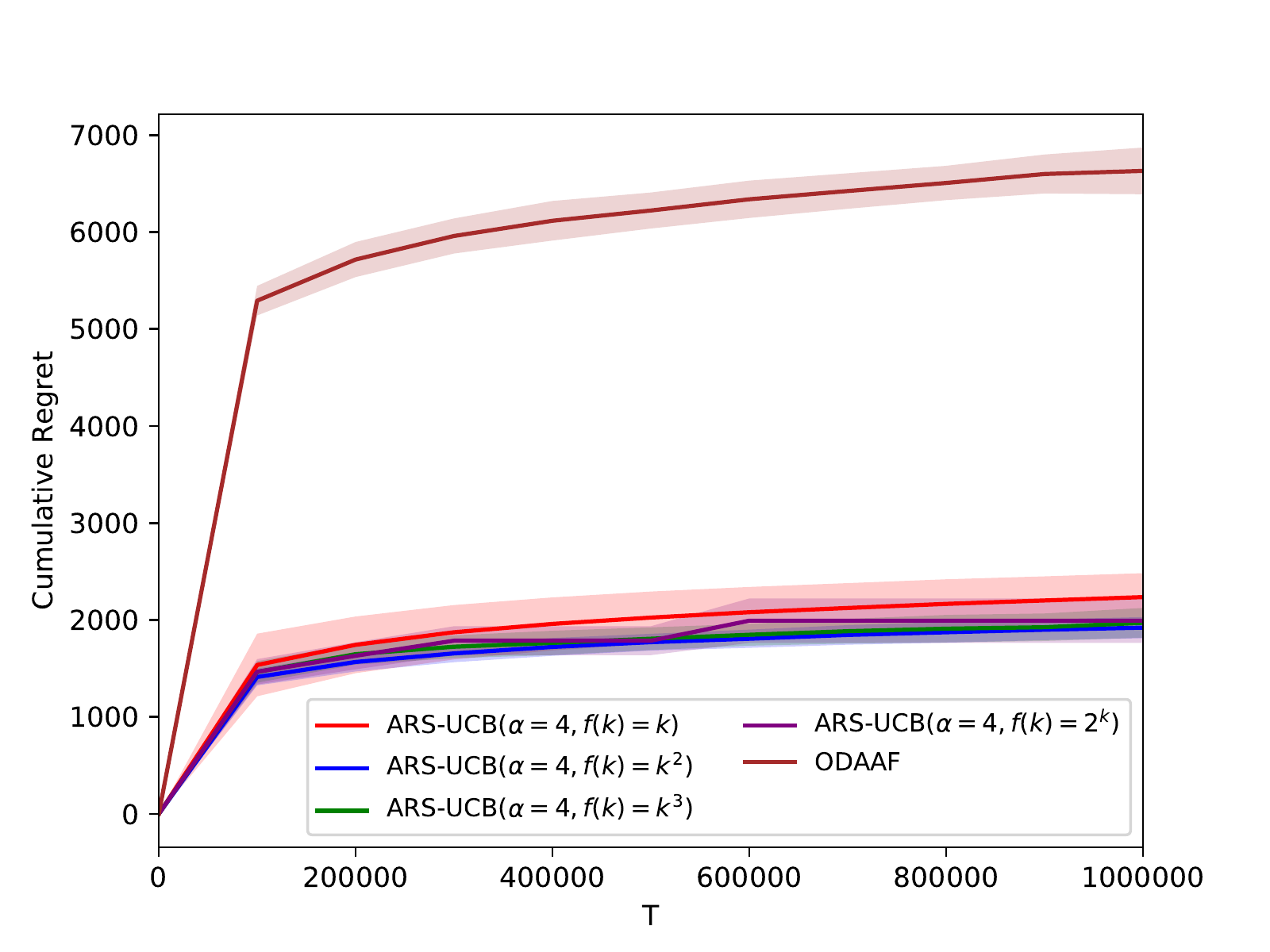}}
\subfigure[]{ \label{Figure_4} 
\includegraphics[width=1.34in]{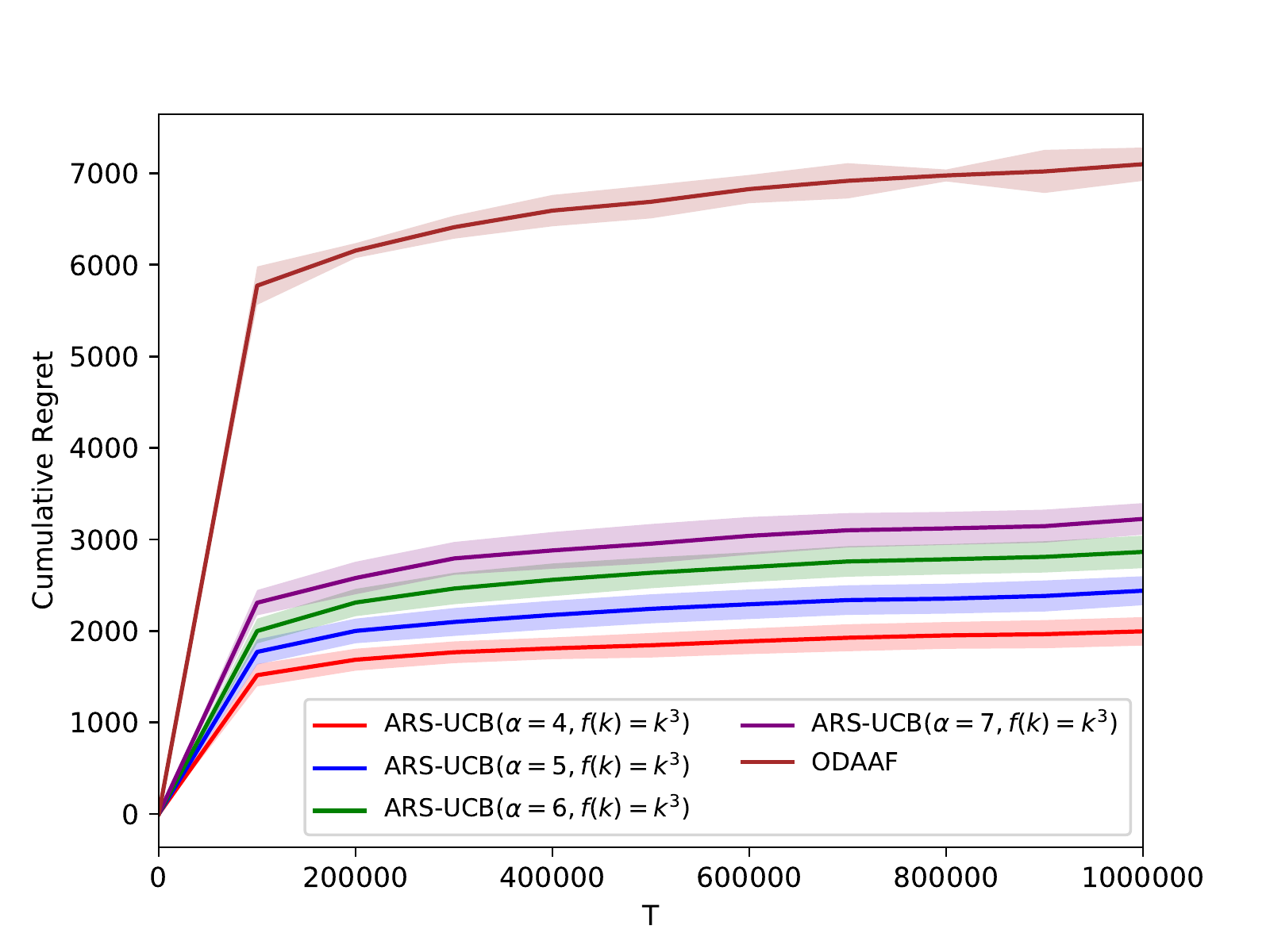}}
\subfigure[]{ \label{Figure_5} 
\includegraphics[width=1.34in]{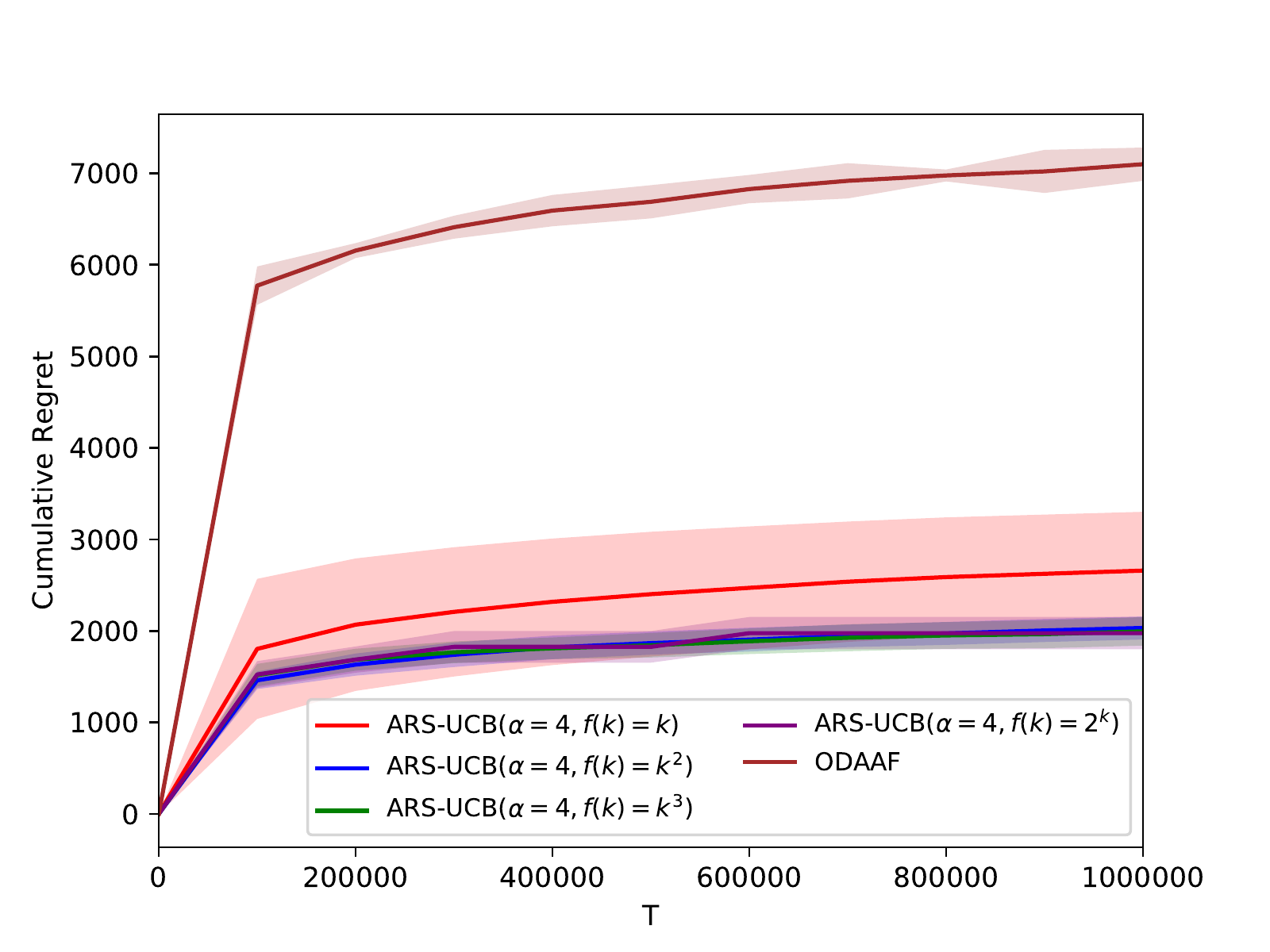}}
\caption{Experiments: Comparison between cumulative regrets of ARS-UCB and ODAAF (delayed reward)
}
\end{figure}

\subsubsection{Bounded interval}
In this case, the reward of pulling an arm at time $t$ takes effect in time interval $[t+d_{\min}, t+d_{\max})$ and the effects within this period remains the same. That is, $\E[r_{a(t), d_{\min}}(t)] = {s_{a(t)}\over d_{\max} - d_{\min}}$, and $\forall \tau \in [t+d_{\min}, t+d_{\max}), r_{a(t), \tau}(t) = r_{a(t), d_{\min}}(t)$ (the reward vectors are of the form $[0,\cdots,0,r,\cdots,r, 0, \cdots]$).  
In Figure \ref{Figure_10} we choose $(d_{\min}, d_{\max}) = (30,40)$, and in Figure \ref{Figure_11} we choose $(d_{\min}, d_{\max}) = (10,20)$.
For comparison, we choose the generalized ODAAF algorithm proposed in \cite{garg2019stochastic} with accurate knowledge about the reward interval size $d_{\max}$  as benchmark. 


\subsubsection{Linearly decreasing reward}
In this case, the reward of pulling an arm at time $t$ takes effect from time $t+1$ and lasts for $d$ time steps. Moreover, its value decreases linearly as time going on. That is, $\forall \tau \in [1,d]$ we have that $\E[r_{a(t),\tau}(t)] = (d+1-\tau) \cdot {2s_{a(t)}\over d(d+1)}$ (the reward vectors are of the form $[dr,(d-1)r,\cdots,r, 0, \cdots]$). In Figure \ref{Figure_12} we choose $d = 100$, and in Figure \ref{Figure_13} we choose $d = 50$. For comparison, we choose the generalized ODAAF algorithm proposed in \cite{garg2019stochastic} with accurate knowledge about the reward interval size $d$  as benchmark. 

\subsubsection{Linearly increasing reward}
In this case, the reward of pulling an arm at time $t$ takes effect from time $t+1$ and lasts for $d$ time steps. Moreover, its value increases linearly as time going on. That is, $\forall \tau \in [1,d]$ we have that $\E[r_{a(t),\tau}(t)] = \tau \cdot {2s_{a(t)}\over d(d+1)}$ (the reward vectors are of the form $[r,2r,\cdots,dr, 0, \cdots]$). In Figure \ref{Figure_18} we choose $d = 100$, and in Figure \ref{Figure_19} we choose $d = 50$. For comparison, we choose the generalized ODAAF algorithm proposed in \cite{garg2019stochastic} with accurate knowledge about the reward interval size $d$  as benchmark. 

\begin{figure}[h]
\centering 
\subfigure[]{ \label{Figure_10} 
\includegraphics[width=1.34in]{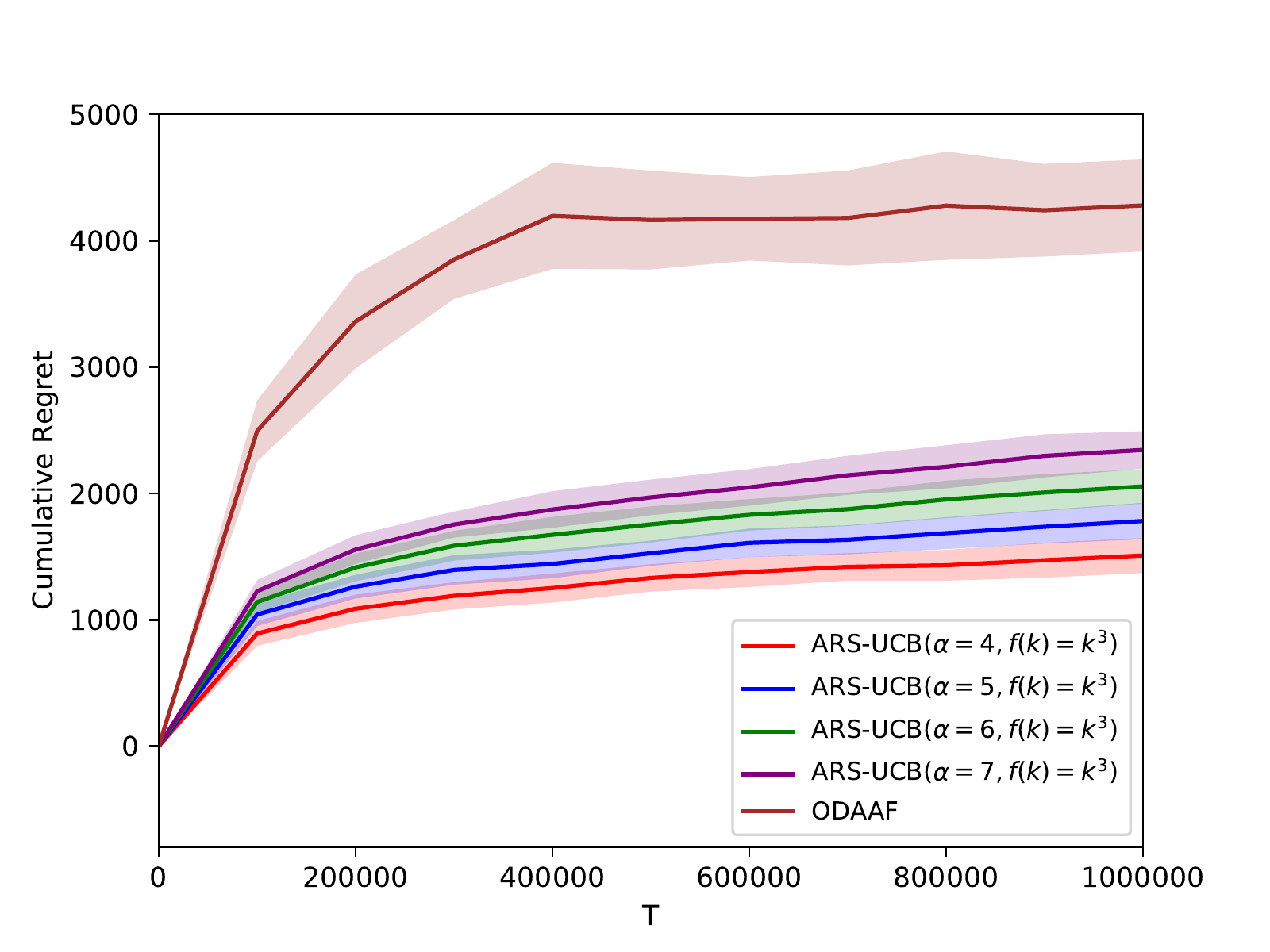}}
\subfigure[]{ \label{Figure_11} 
\includegraphics[width=1.34in]{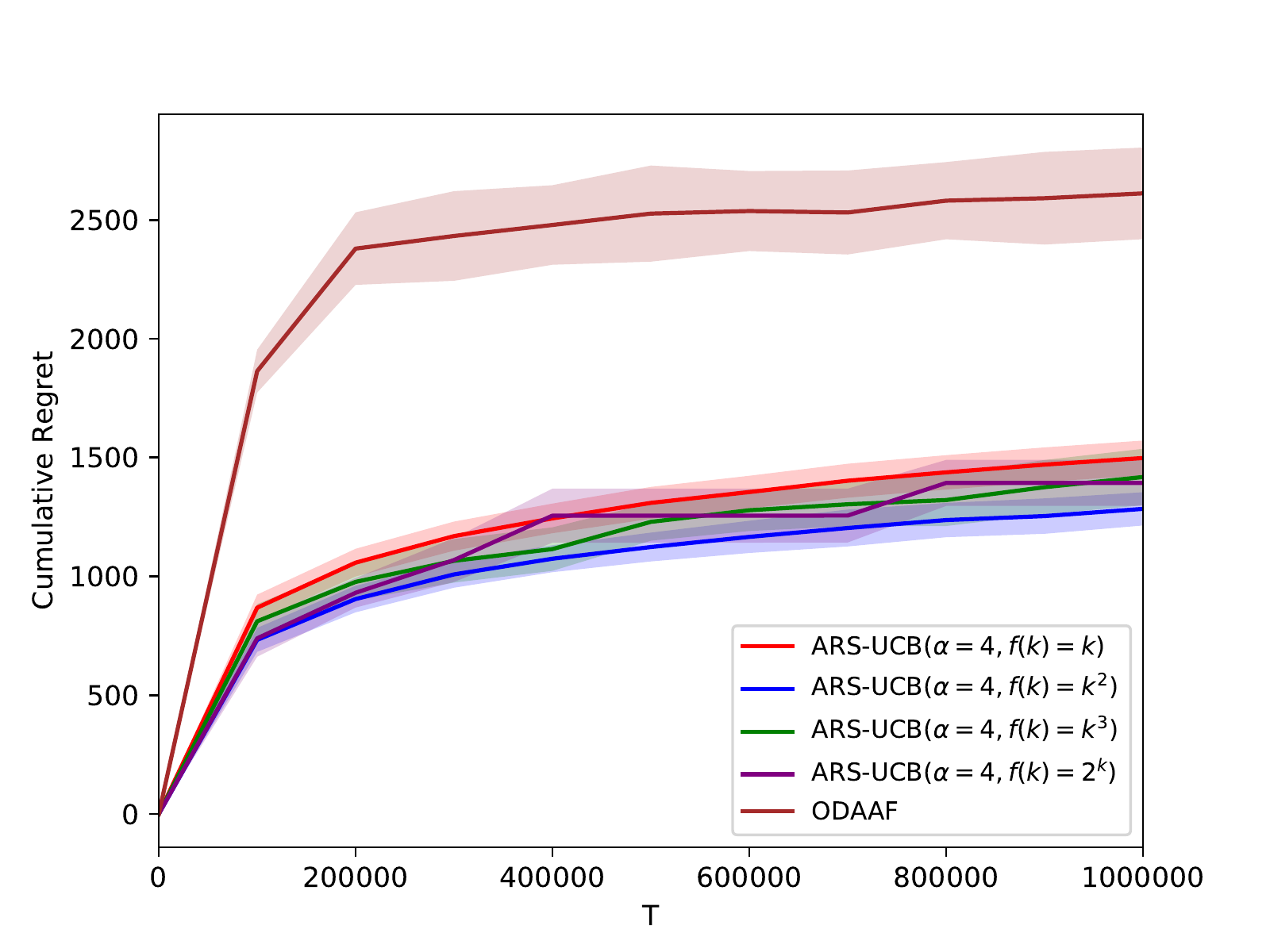}}
\subfigure[]{ \label{Figure_12} 
\includegraphics[width=1.34in]{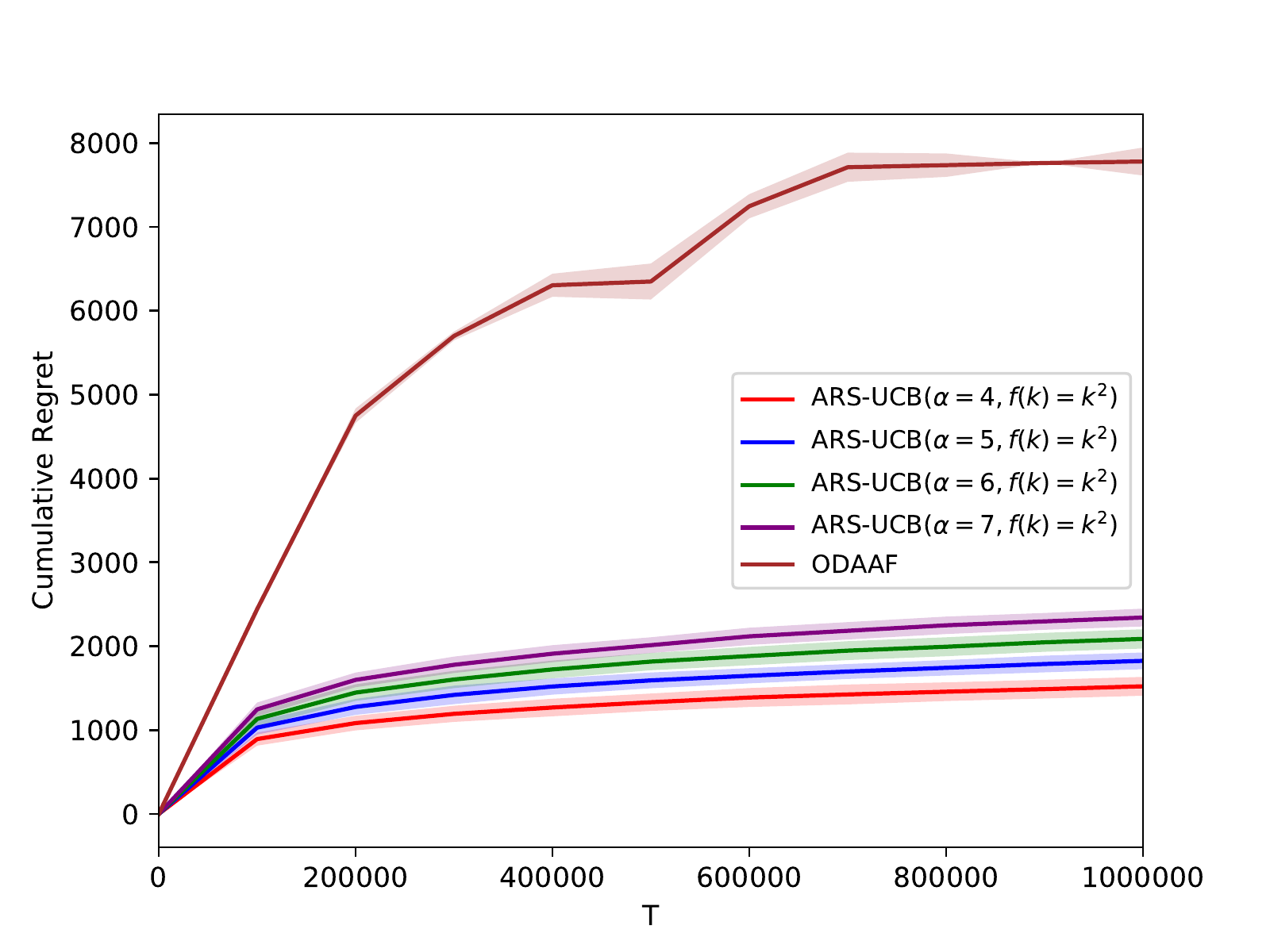}}
\subfigure[]{ \label{Figure_13} 
\includegraphics[width=1.34in]{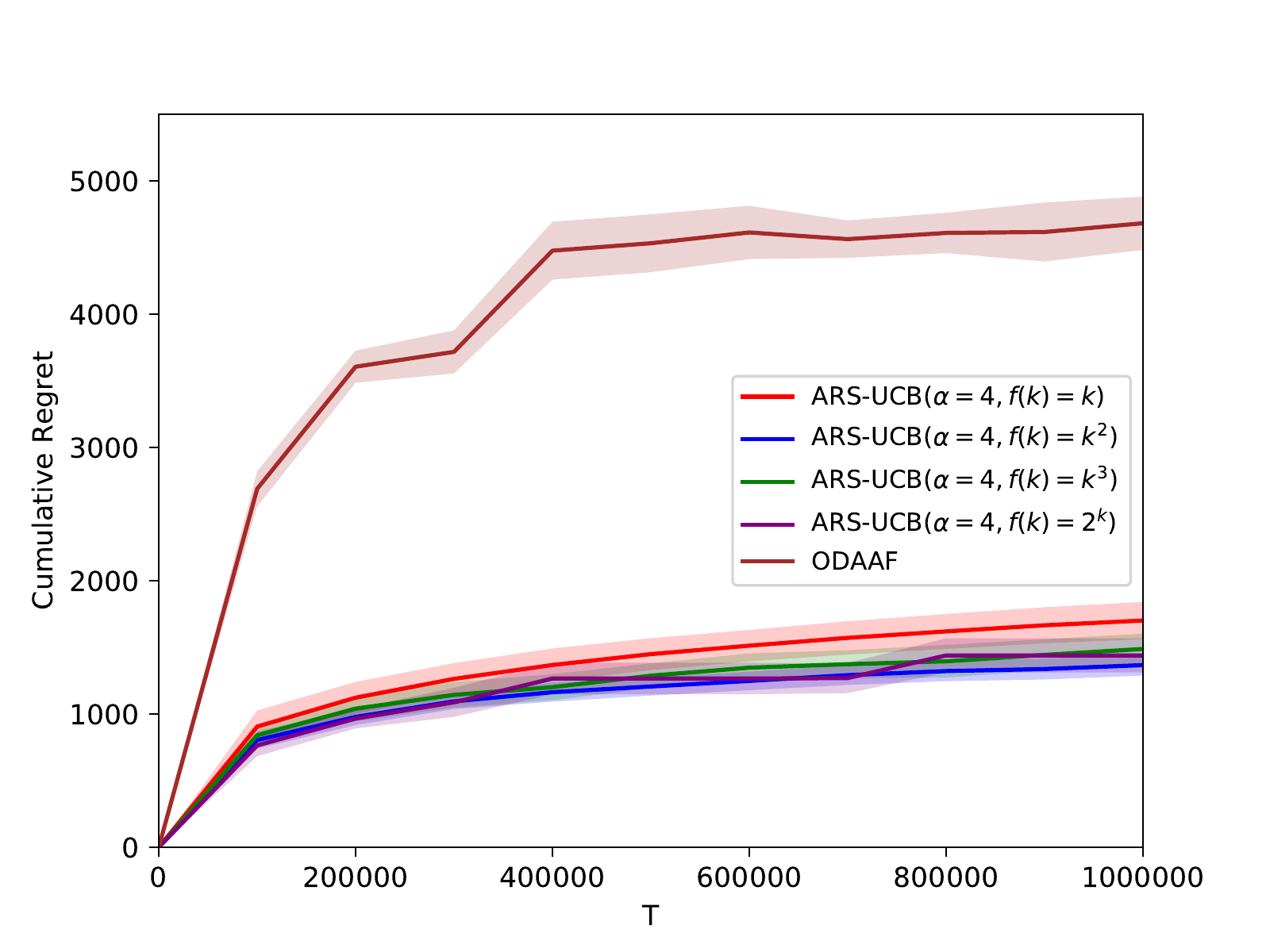}}
\subfigure[]{ \label{Figure_18} 
\includegraphics[width=1.34in]{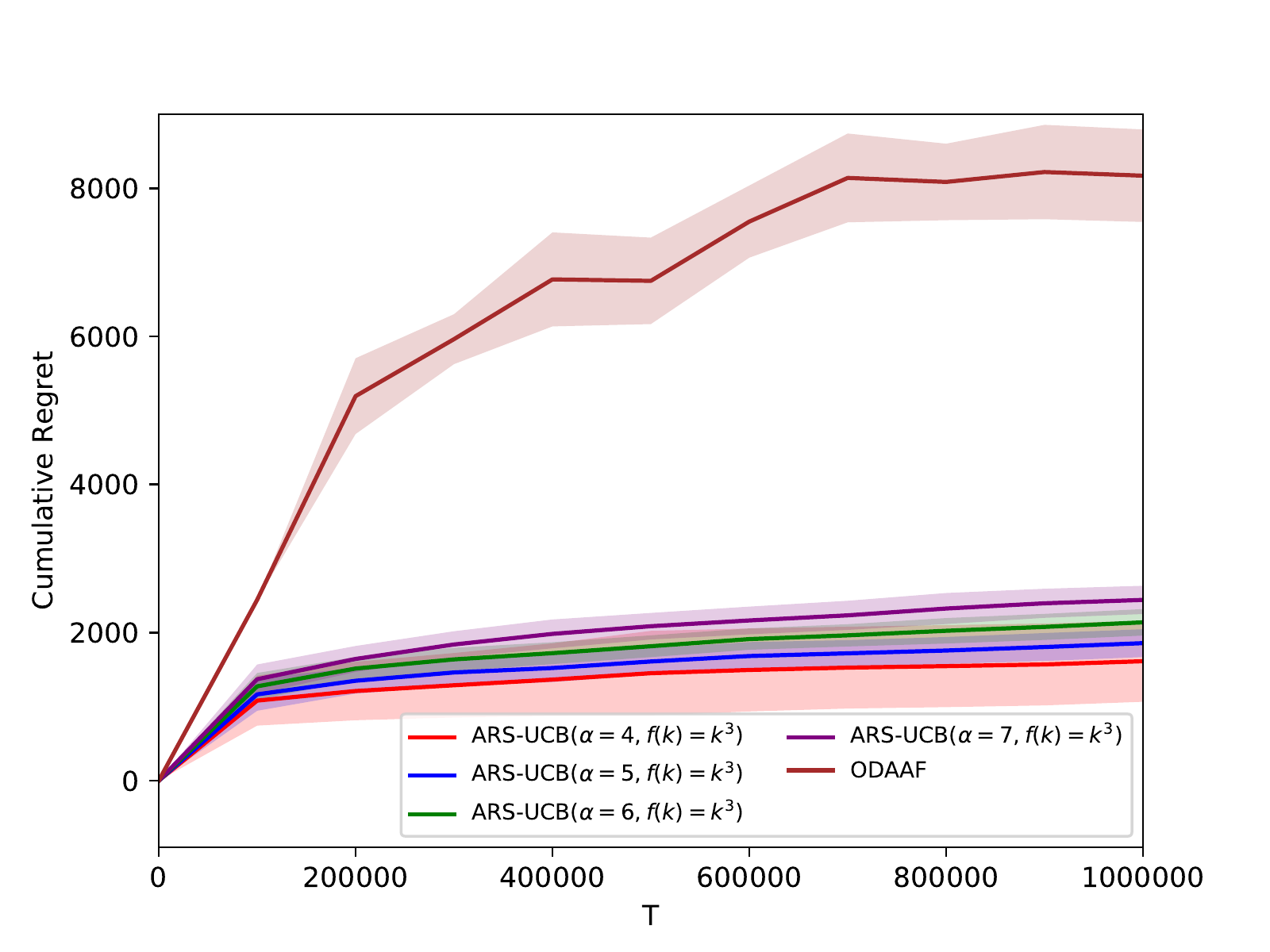}}
\subfigure[]{ \label{Figure_19} 
\includegraphics[width=1.34in]{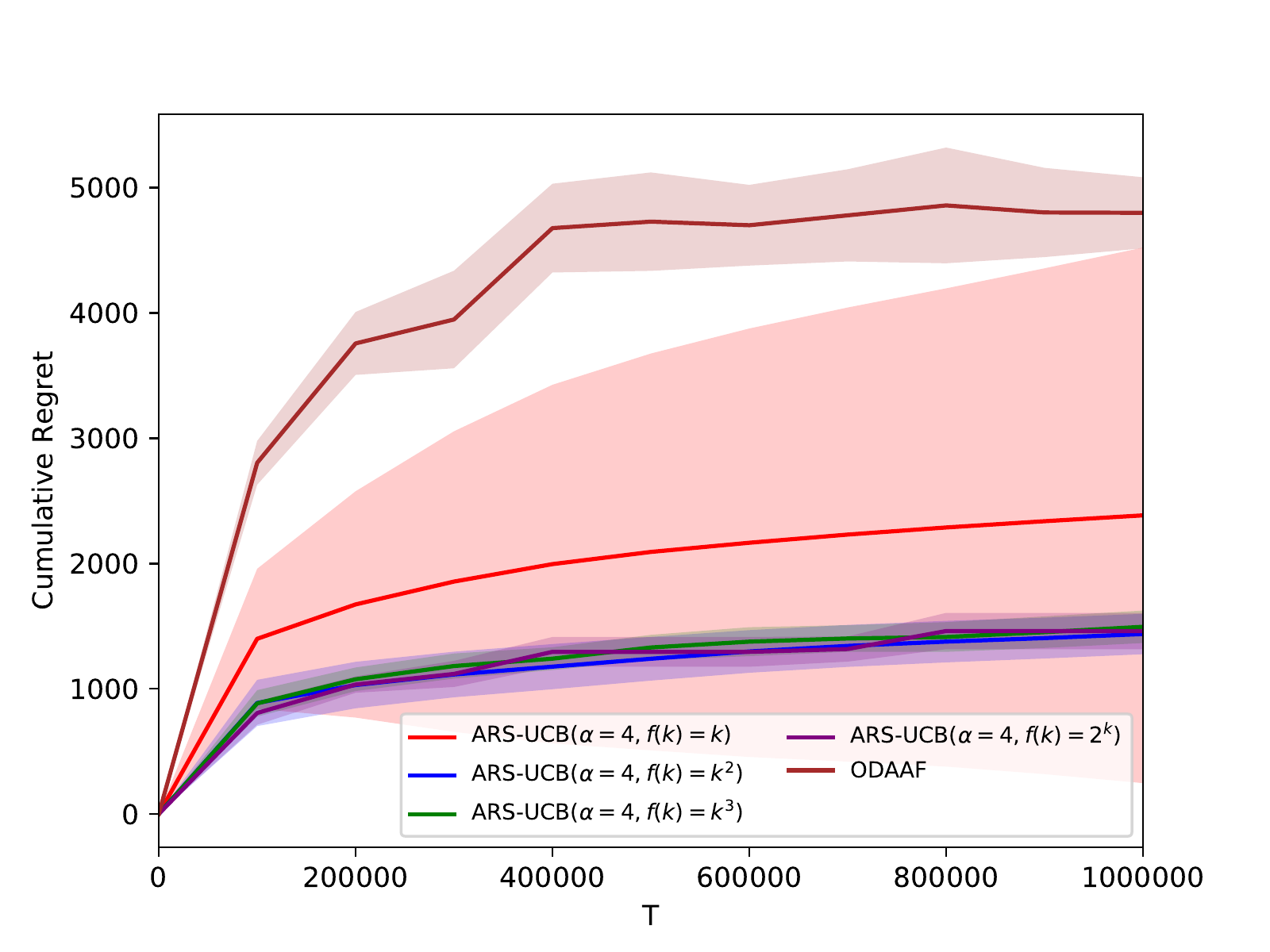}}
\caption{Experiments: Comparison between cumulative regrets of ARS-UCB and ODAAF (composite reward)
}
\label{Figure_Comparison}
\end{figure}


\subsubsection{Discounted reward}
In this case, the reward of pulling an arm at time $t$ takes effect from time $t+1$ and lasts forever. Moreover, its value decreases exponentially with a factor $\gamma \in (0,1)$. That is, $\E[r_{a(t),1}(t)] = (1-\gamma)s_{a(t)}$, and $\forall \tau > 1, r_{a(t),\tau}(t) = \gamma r_{a(t),\tau-1}(t)$ (the reward vectors are of the form $[r, \gamma r, \gamma^2 r, \cdots]$). 
%
In Figure \ref{Figure_14} we choose $\gamma = 0.8$, and in Figure \ref{Figure_15} we choose $\gamma = 0.9$. 

\subsubsection{Polynomially decreasing reward}
In this case, the reward of pulling an arm at time $t$ takes effect from time $t+1$ and lasts forever. Moreover, its value decreases polynomially with a factor $\gamma > 1$. That is, $\forall \tau \ge 1$, $\E[r_{a(t),\tau}(t)] = c_\gamma \cdot s_{a(t)}/ \tau^{\gamma}$, where $c_\gamma$ is a normalization factor such that $\sum_{\tau=1}^\infty \E[r_{a(t),\tau}(t)] = s_{a(t)}$ (the reward vectors are of the form $[r,{r\over 2^\gamma},{r\over 3^\gamma}, \cdots]$).
In Figure \ref{Figure_16}, we choose $\gamma = 3$, and in Figure \ref{Figure_17}, we choose $\gamma = 2$. 

The reward intervals in the discounted reward case and the polynomially decreasing case are with infinite size, and there is no existing benchmarks. Therefore, we only compare the cumulative regrets of ARS-UCB with different parameters.  

\begin{figure}[t]
\centering 
\subfigure[]{ \label{Figure_14} 
\includegraphics[width=1.34in]{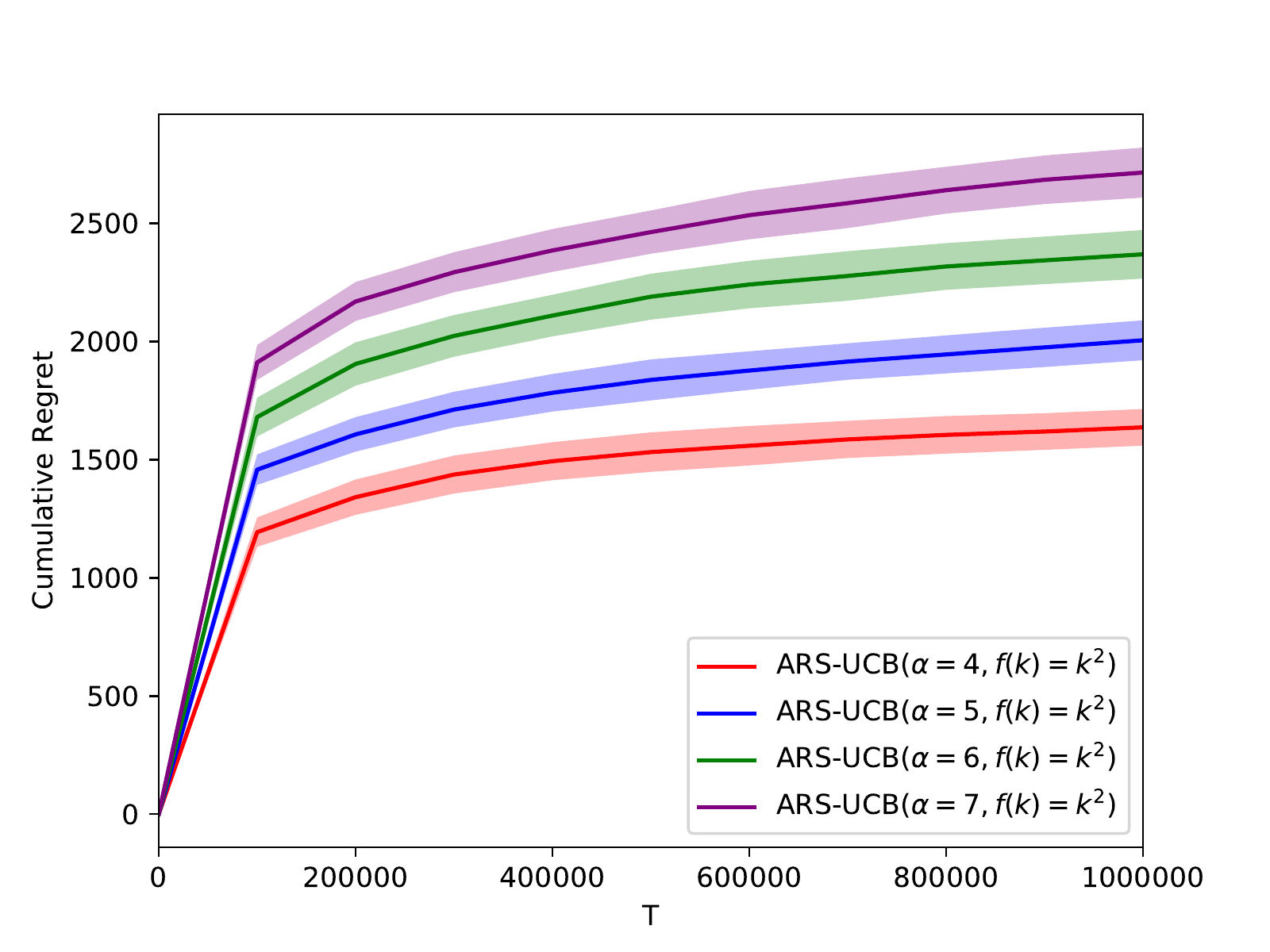}}
\subfigure[]{ \label{Figure_15} 
\includegraphics[width=1.34in]{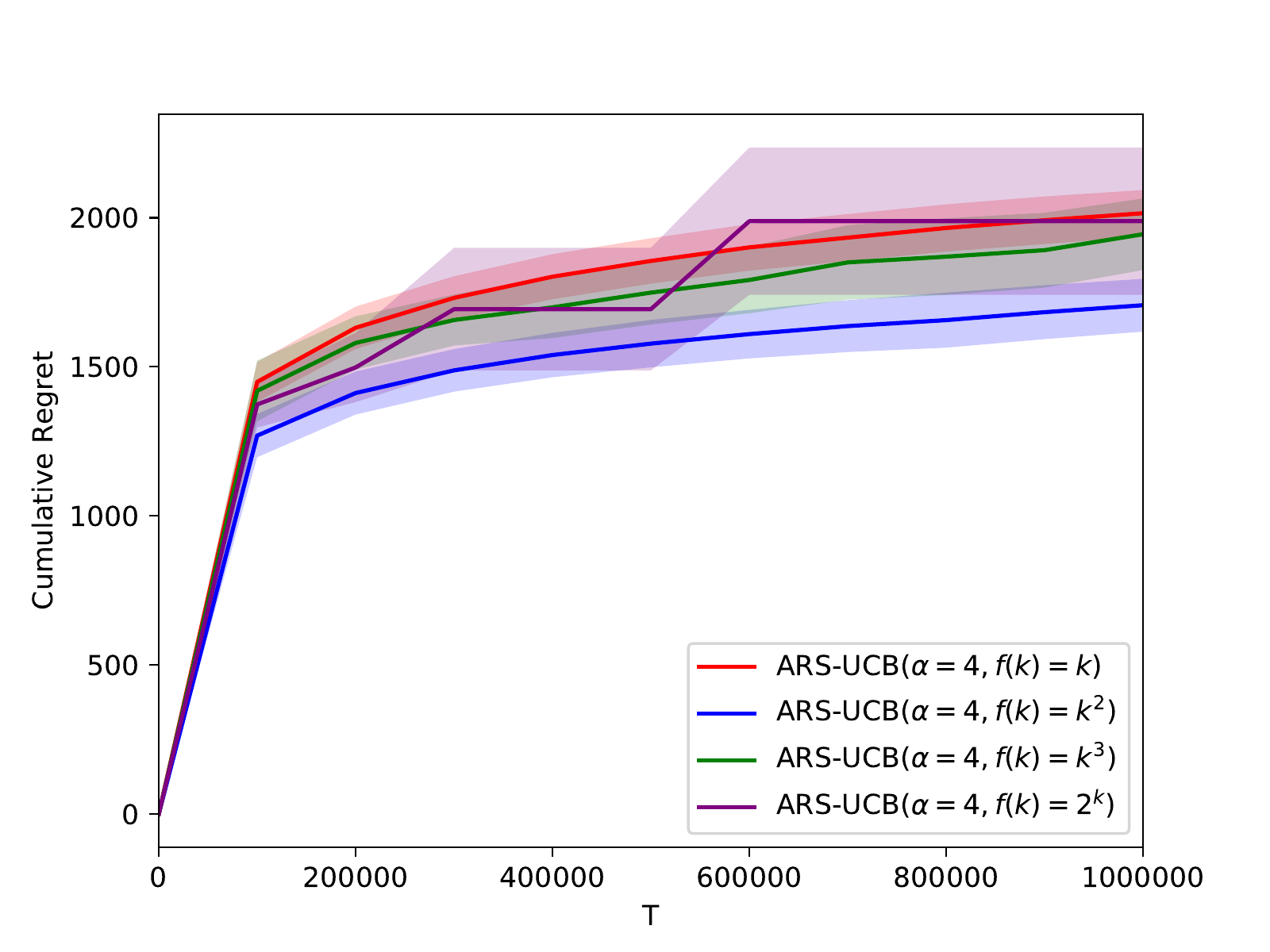}}
\subfigure[]{ \label{Figure_16} 
\includegraphics[width=1.34in]{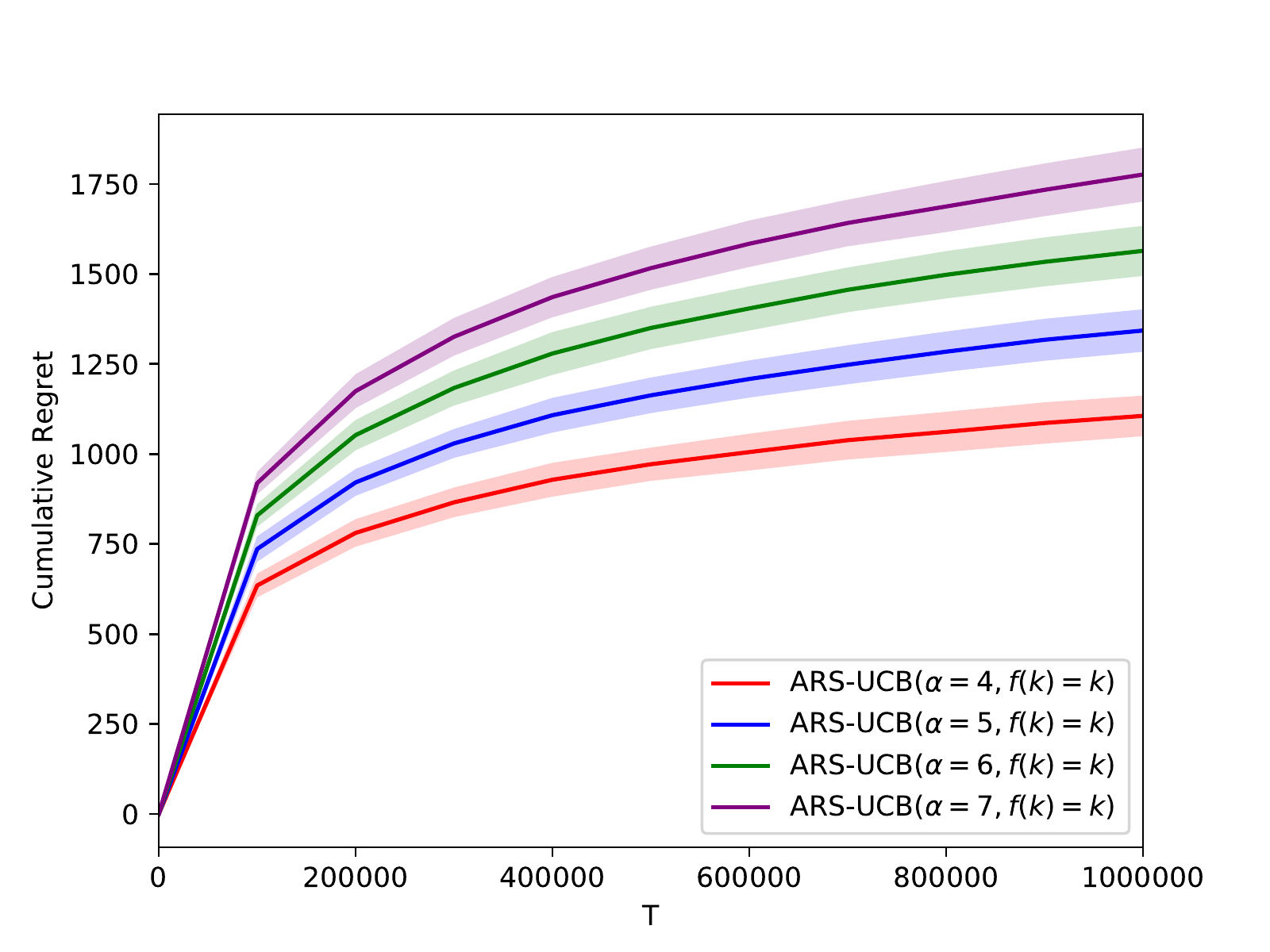}}
\subfigure[]{ \label{Figure_17} 
\includegraphics[width=1.34in]{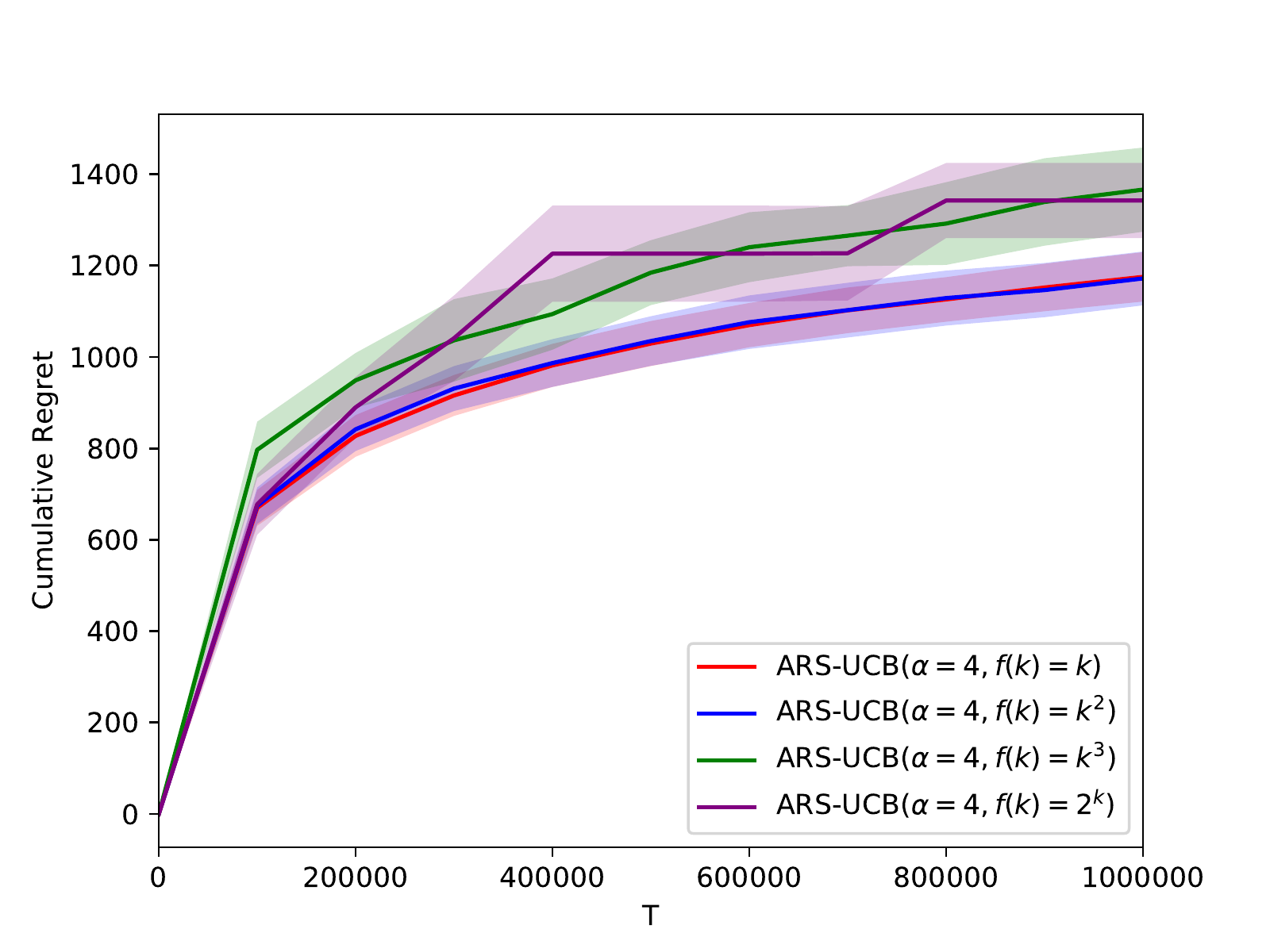}}
\caption{Experiments: The cumulative regrets of ARS-UCB
}
\end{figure}

\subsubsection{Conclusion on experimental results}

In the above experiments, 
we observe that the cumulative regrets of ARS-UCB are always logarithmic in $T$, which is expected from our theoretical analysis. Moreover, in all our experiments, 
ARS-UCB significantly outperforms ODAAF, which assumes full knowledge of the delay size. 
%
%
This is because that in ODAAF 
the player needs to pull  the sub-optimal arms more to eliminate them, which leads to a worse performance than ARS-UCB.  
Therefore, ARS-UCB is more robust and more efficient in the case of minimizing the cumulative regret.

In Figures \ref{Figure_2}, \ref{Figure_4}, \ref{Figure_10}, \ref{Figure_12}, \ref{Figure_18}, \ref{Figure_14} and \ref{Figure_16}, we choose the same function of $f(k)$ and vary the value of $\alpha$, while Figures \ref{Figure_3}, \ref{Figure_5}, \ref{Figure_11}, \ref{Figure_13}, \ref{Figure_19}, \ref{Figure_15} and \ref{Figure_17} show the performance under different $f(k)$ functions with a same value $\alpha=4$ (other $\alpha$ values show similar behavior). From these results, we can see that the combination of $\alpha = 4$ and $f(k) = k^2$ behaves better in most of these problem instances, and leads to both small regrets and small variances.

\subsection{The Adversarial Setting} \label{Sec_Simu_Adv}

%
%
Here we use two datasets in 
Kaggle: 
the \emph{Outbrain Click Prediction}  (Outbrain) dataset \cite{Kaggle1}, and the \emph{Coupon Purchase Prediction} (Coupon) dataset \cite{Kaggle2}. 

The Outbrain dataset records whether users click a provided advertisement when they enter the system.
In this experiment, the system needs to decide the category of advertisements to show to an incoming user, and his goal is to maximize the number of users clicking the advertisement.  
In this dataset, a user only   clicks \emph{one} category of the advertisements. If the system provides the category correctly, it gets one click. Otherwise it gets zero. Thus, the reward (and the feedback) is whether the advertisement is clicked.  
The Coupon dataset records whether users click offered coupons. In this case, a system is providing coupons to its users, and it needs to decide what coupon to offer on each day of the week. 
There are totally seven choices, and only if the user wants to use that coupon on that day, he will choose to click that coupon to see more details.  
The goal of the system is to maximize the number of users that click the coupons. Thus, the reward (and the feedback) of one time slot is whether the coupon is clicked. 
In our setting, the 
feedback is given to the player after a delay $z$, which is artificially simulated. 

\begin{figure}[t]
\centering 
\subfigure[]{ \label{Figure_6} 
\includegraphics[width=1.34in]{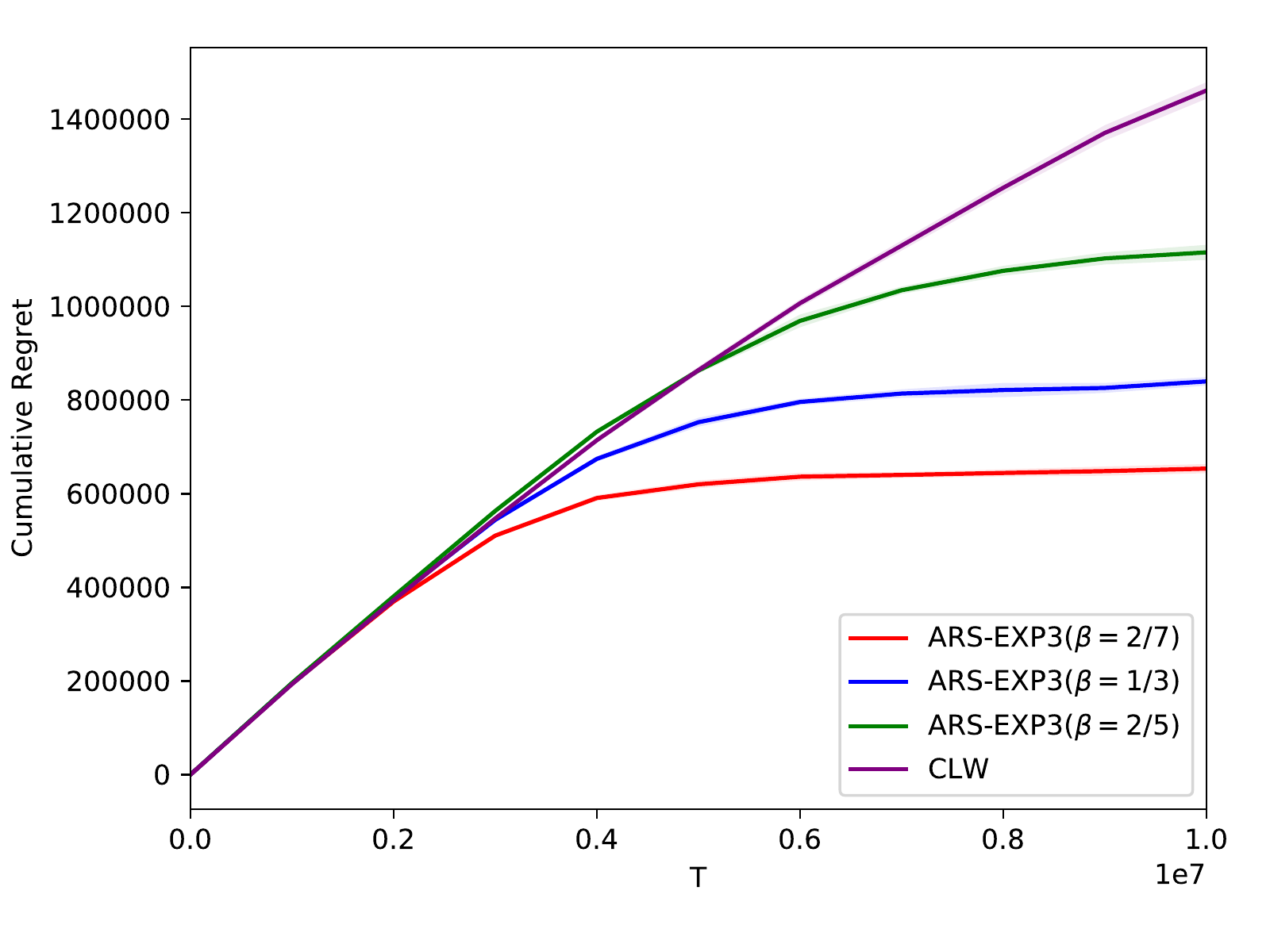}}
\subfigure[]{ \label{Figure_7} 
\includegraphics[width=1.34in]{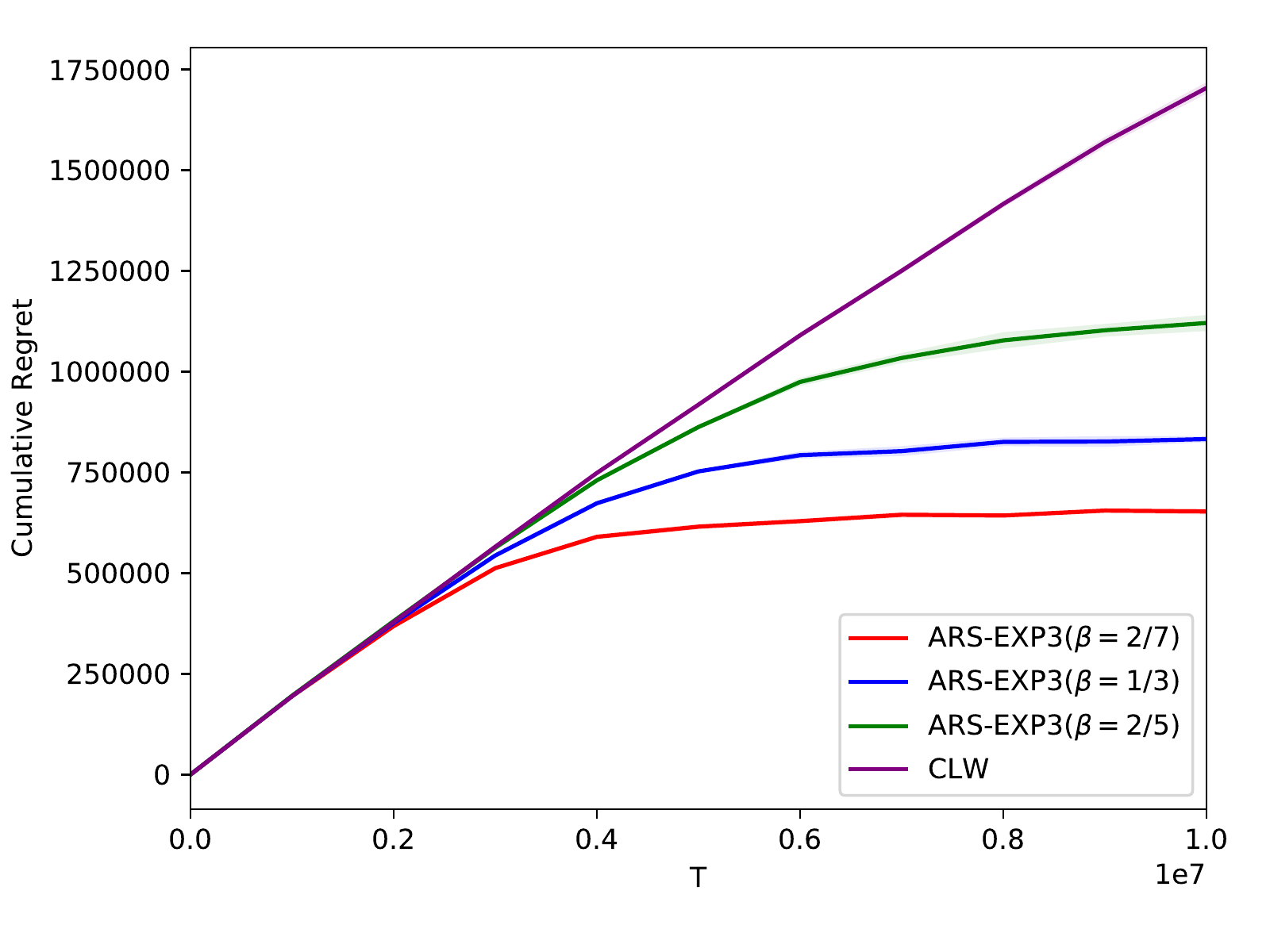}}
\subfigure[]{ \label{Figure_8} 
\includegraphics[width=1.34in]{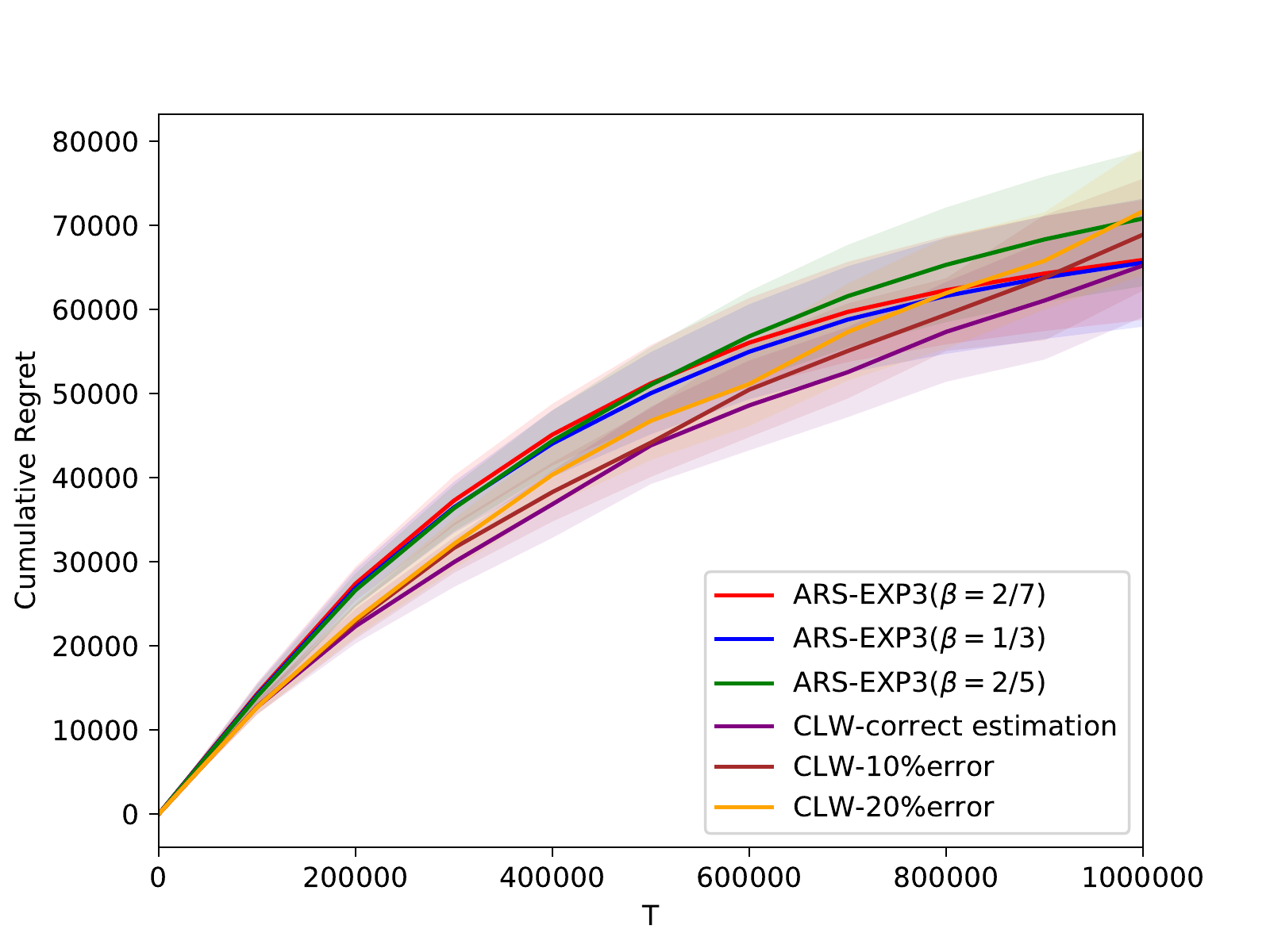}}
\subfigure[]{ \label{Figure_9} 
\includegraphics[width=1.34in]{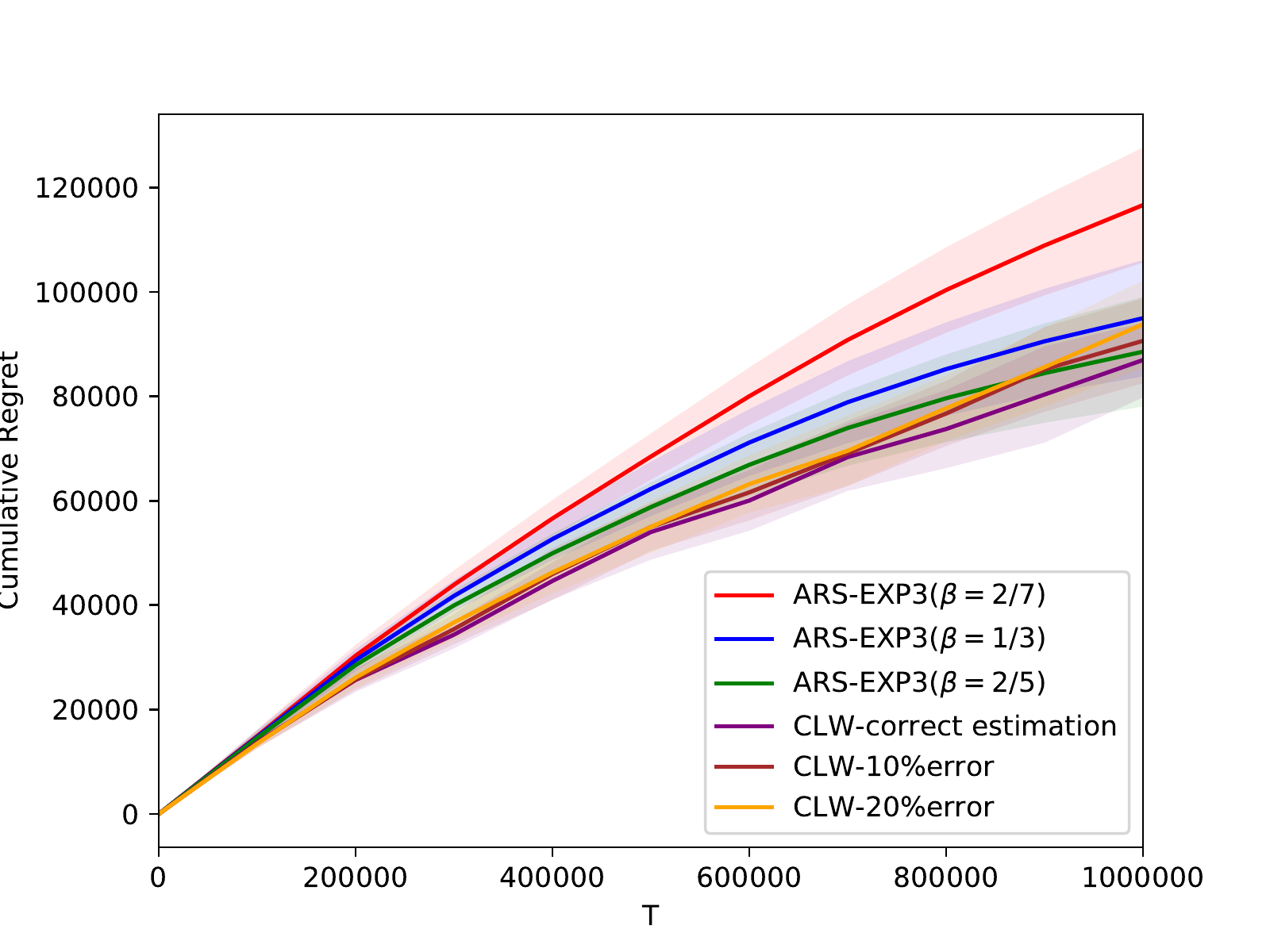}}
\caption{Experiments: Comparison between cumulative regrets of ARS-EXP3 and CLW
}
\end{figure}


In the experiments of Outbrain dataset, we consider non-oblivious delays with $z \le d = 10$ in  Figure \ref{Figure_6} and $z \le d = 20$ in  Figure \ref{Figure_7}. Here the adversary choose delay $z = d$ only if the chosen arm is the best one and it has been chosen for at least $3d$ times in succession, otherwise it set $z = 1$. From the experimental results, we can see that the CLW policy \cite{Bianchi2018Nonstochastic} suffers from a linear regret in this setting, while our ARS-EXP3 policy achieves a sub-linear regret. This accords to the theoretical analysis. 

Then, we use Coupon dataset to simulate the oblivious setting, in which the CLW policy also has theoretical guarantee. In Figure \ref{Figure_8}, we use $d=10$ and $z$ is chosen uniformly in $[5,10]$. In Figure \ref{Figure_9}, we set $d = 20$ and choose $z$ uniformly in $[10,20]$. From these results, we can see that CLW 
performs only slightly better than ARS-EXP3 even with the correct delay estimation $d$. 
When the estimation on $d$ has a $10\%$ error, CLW can perform worse than ARS-EXP3.

The experimental results show that in the oblivious setting, ARS-EXP3 policy is more robust, especially when there is no accurate information about the delay $d$. As for the non-oblivious delay case, ARS-EXP3 is the only existing efficient learning policy.   

\section{Conclusion} 

In this paper, we consider the MAB problem with composite and anonymous feedback, both the stochastic and adversarial settings. For the former case, we propose the ARS-UCB algorithm, and for the latter case, we design the ARS-EXP3 algorithm. 
These algorithms require \emph{zero} knowledge about the feedback delay. 
We establish theoretical regret upper bounds for the algorithms, and then use experiments to show that our algorithms outperform existing benchmarks.  

Our future research includes deriving a matching regret lower bound for the non-oblivious adversarial case. 
In \cite{Bianchi2018Nonstochastic}, the authors also provide a similar policy for bandit convex optimization (BCO) with delayed and anonymous feedback. 
How to adapt our framework and obtain tight regret upper bounds for BCO 
is  another  interesting future research problem.


%
%
%
%
%

%


\bibliography{sample-base}

\onecolumn

\clearpage

\appendix

\section*{Appendix}

The proofs of all lemmas are presented in the end of the sections they belong to. 

\section{Proof of Theorem \ref{Theorem_UCB}}\label{Section_proof_1}

Denote $rad_i(t) = \sqrt{\alpha\log t\over N_i(t)}$. Then, for each sub-optimal arm $i \ne 1$, if it is chosen at time $t$, we must have either $u_i(t) = 1$ or $\hat{s}_i(t) + rad_i(t) \ge \hat{s}_1(t) + rad_1(t)$. 

Since $s_1 = s_i + \Delta_i$, $\hat{s}_i(t) + rad_i(t) \ge \hat{s}_1(t) + rad_1(t)$ implies that: 
\begin{equation*}
    \hat{s}_i(t) + s_1 + 2rad_i(t) \ge s_i+rad_i(t)+\hat{s}_1(t) + rad_1(t) + \Delta_i.
\end{equation*}

Now we define some useful events. Note that if we choose to pull arm $i$ at time $t$, one of the following events must happen: 
\begin{eqnarray*}
  \mathcal{A}_i(t) &=& \{\hat{s}_i(t) \ge s_i+rad_i(t)\},\\
  \mathcal{B}(t) &=& \{s_1 \ge \hat{s}_1(t)+rad_1(t)\},\\
  \mathcal{C}_i(t) &=& \{2rad_i(t) \ge \Delta_i\},\\
  \mathcal{D}_i(t) &=& \{\hat{s}_i(t) + rad_i(t) \ge 1\}.
\end{eqnarray*}

Let $\mathcal{U}$ be the set that includes all the last time steps of all the rounds, and note that $\mathcal{U}$ is random. Also let $z(t)$ be the size of the next round after time $t$, 
then we have
\begin{eqnarray}
	\nonumber Reg_i(T) &=& \E_{\mathcal{U}}\left[ \sum_{t\in \mathcal{U}}\E[\I[a(t+1) = i]]\Delta_i z(t)\right]\\
	\nonumber &\le& \E_{\mathcal{U}}\left[\sum_{t\in \mathcal{U}} \E[\I[\mathcal{A}_i(t) \cup \mathcal{B}(t) \cup \mathcal{C}_i(t) \cup \mathcal{D}_i(t)]]\Delta_i z(t)\right]\\
	\nonumber &=& \E_{\mathcal{U}}\left[ \sum_{t\in \mathcal{U}}\Pr[\mathcal{A}_i(t) \cup \mathcal{B}(t) \cup \mathcal{C}_i(t) \cup \mathcal{D}_i(t)]\Delta_i z(t)\right],
\end{eqnarray}
where $Reg_i(T)$ denotes the expected cumulative regret caused by pulling arm $i$ up to time step $T$, i.e., $Reg_i(T) = \E[N_i(T)]\Delta_i$.


First consider events $\mathcal{A}_i(t)$ and $\mathcal{B}(t)$. Since our observations have bias (due to reward spread), it is hard to analyze the inequalities that contains $rad_i(t)$. Instead, we construct a hidden confidence radius as
\begin{eqnarray*}
rad'_i(t) \triangleq   \sqrt{4\log t \over N_i(t)} +  d_1{K_i(t)\over N_i(t)} + \sqrt{12d_2K_i(t)\log t \over N_i^2(t)}, 
\end{eqnarray*} 
and the corresponding hidden confidence bounds are defined as:
\begin{eqnarray*}
\ell_i(t) &\triangleq& \min\{\hat{s}_i(t) - rad'_i(t),0\},\\
v_i(t) &\triangleq& \max\{\hat{s}_i(t) + rad'_i(t),1\}.
\end{eqnarray*}

The next lemma shows that $\ell_i(t), v_i(t)$ are actual confidence bounds for $s_i$.

\begin{lemma}\label{Lemma_1main}
    For any $t > 0$ and $1 \le i \le N$, with probability at least $1 - {1\over t^3}$, we have that $s_i \le v_i(t)$. Similarly, with probability at least $1 - {1\over t^3}$, $s_i \ge \ell_i(t)$.
\end{lemma}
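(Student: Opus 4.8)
The plan is to control $\hat s_i(t)-s_i$ by inserting the unobserved empirical average $L_i(t)/N_i(t)$ and splitting the error into two parts,
\[
\hat s_i(t)-s_i \;=\; \frac{M_i(t)-L_i(t)}{N_i(t)} \;+\; \Bigl(\frac{L_i(t)}{N_i(t)}-s_i\Bigr),
\]
and to show that the three summands in $rad'_i(t)$ each dominate one source of error with the claimed probability. The bounds $s_i\le v_i(t)$ and $s_i\ge\ell_i(t)$ are mirror images, so I would write out only the upper‑tail argument.

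For the ``clean'' term $L_i(t)/N_i(t)-s_i$: $L_i(t)$ is a sum of $N_i(t)$ i.i.d.\ copies of $\|\bm r_i\|_1\in[0,1]$ with mean $s_i$, and $N_i(t)=F(K_i(t))$ only takes values in the deterministic set $\{F(1),F(2),\dots\}$. Applying Hoeffding's inequality at each candidate value $n\le t$ with deviation $\sqrt{4\log t/n}$ (failure probability $\le 2t^{-8}$) and taking a union bound over the $O(t)$ candidates yields $|L_i(t)/N_i(t)-s_i|\le\sqrt{4\log t/N_i(t)}$ outside an event of probability $\le 2t^{-7}$; this is the first term of $rad'_i(t)$.

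For the ``bias'' term $M_i(t)-L_i(t)$: using the per‑round identity \eqref{eq_111} (the two triangles in Figure~\ref{Figure_1}), over the $K_i(t)$ rounds devoted to arm $i$ the difference collapses to a signed sum of at most $O(K_i(t))$ boundary terms, each equal to the total in‑flight (generated‑but‑not‑yet‑received) reward mass at a round boundary. Two structural facts drive the estimate: (i) such a boundary mass has conditional expectation at most $d_1$ — this is exactly what $d_1=\sum_{d'\ge 1}\max_i\E[\sum_{\tau\ge d'}r_{i,\tau}]$ encodes, since the mass still owed by a pull made $d'$ steps earlier has mean $\E[\sum_{\tau\ge d'}r_{i,\tau}]$; and (ii) because the amounts owed by pulls made at distinct time slots come from independent reward vectors, the conditional variance of a boundary mass is at most $d_2$. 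Writing the signed sum as predictable compensator plus martingale, the compensator is bounded by $d_1K_i(t)$ (the second term of $rad'_i(t)$ after dividing by $N_i(t)$), and a Bernstein/Freedman‑type inequality with quadratic variation $\le d_2K_i(t)$ gives a fluctuation of order $\sqrt{12d_2K_i(t)\log t}$ outside an event of probability $\le t^{-4}$, again after a union bound over the $O(t)$ possible values of $K_i(t)$. Dividing by $N_i(t)$ produces the last two terms of $rad'_i(t)$; combining with the first part and choosing the constants so that all failure probabilities sum to at most $t^{-3}$ completes the proof, and the lower bound $s_i\ge\ell_i(t)$ follows by the symmetric argument.

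The delicate point — the one I would spend the most care on — is justifying the conditional‑moment bounds (i) and (ii) in the presence of feedback: the algorithm's round‑by‑round choices depend on the already‑arrived components of past reward vectors, so conditioning on the ``what the player has seen'' filtration distorts the law of the not‑yet‑arrived tails. I would sidestep this by working with the enlarged filtration in which a pull's entire reward vector is revealed at the moment the arm is pulled; relative to it, each reward vector is a fresh i.i.d.\ draw independent of the past, the player's decisions are adapted to a coarsening, and the in‑flight‑mass process admits a Doob decomposition with precisely the compensator and quadratic‑variation bounds above. A secondary nuisance is that in the infinite‑reward‑interval setting a single boundary mass need not be $O(1)$, so the martingale increments are not a priori bounded; I would handle this either by a mild truncation (using that the summability defining $d_1$ forces the tail of the in‑flight mass to be negligible) or by absorbing it into the constant $c_f^*(d_1,d_2,N,\alpha)$ of Theorem~\ref{Theorem_UCB}. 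The remaining bookkeeping — verifying that each maximal run of consecutive arm‑$i$ rounds contributes only its two endpoint masses, and threading the union bounds so everything collapses into the single $1/t^3$ — is routine.
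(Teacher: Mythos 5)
Your plan reproduces the same essential ingredients as the paper's proof: the per-round identity \eqref{eq_111} (the two triangles of Figure~\ref{Figure_1}), the bound $d_1$ on the expected in-flight mass at a round boundary, the bound $d_2$ on its variance, and a Bernstein-type concentration step that produces exactly the three terms of $rad_i'(t)$. The packaging is different, though: the paper never separates the ``clean'' and ``bias'' parts. It bounds the unconditional mean and variance of the per-round \emph{observed} sums, $\E[M_i'(k)]\ge f(k)s_i-d_1$ and $\Var[M_i'(k)]\le f(k)/4+2d_2$, sums over rounds, and applies a single Bernstein inequality to $M_i(t)$, choosing $\Delta$ so that the combined variance $N_i(t)/4+2K_i(t)d_2$ yields the radius and the $1/t^3$ bound in one shot. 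Your split (Hoeffding on $L_i(t)$, then a Doob decomposition plus Freedman for $M_i(t)-L_i(t)$) is finer, and your explicit union bounds over the admissible values of $N_i(t)$ and $K_i(t)$ address a point the paper silently glosses over (it applies Bernstein at the random $N_i(t),K_i(t)$ without comment), as does your attention to the adaptivity of the round boundaries.

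There is, however, one step in your route that does not go through as written: the claim that, under the enlarged filtration in which a pull's whole reward vector is revealed at pull time, the compensator of the bias process is bounded by $d_1K_i(t)$. Under that filtration the \emph{entering} in-flight mass at the start of an arm-$i$ round is already measurable, so its contribution to the compensator is the realized mass itself, which is bounded by $d_1$ only in expectation, not almost surely (with infinite reward intervals it can be as large as the number of past pulls). So the Doob decomposition alone does not give a deterministic $d_1K_i(t)$ bound; you would need an additional concentration argument for the sum of realized boundary masses (which is again a Bernstein-type step over independent reward-vector tails), or else take the paper's route of bounding unconditional moments of the whole per-round observation and invoking Bernstein once. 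Relatedly, your fallback of ``absorbing'' the unbounded-increment problem into $c_f^*(d_1,d_2,N,\alpha)$ cannot prove Lemma~\ref{Lemma_1main} as stated, since the lemma is a per-$t$ probability claim rather than a regret claim; the truncation variant (using summability in the definition of $d_1$) is the one that can be made to work. To be fair, the paper's own write-up is loose on these same issues (independence across rounds, boundedness of increments, randomness of $N_i,K_i$), but since you introduced the martingale machinery precisely to be rigorous about them, the compensator step is the piece you would need to repair.
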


%
Now we define another two events based on $rad'_i(t)$:
\begin{eqnarray*}
  \mathcal{E}_i(t) &=& \{|\hat{s}_i(t) - s_i | \ge rad'_i(t)\}, \\
  \mathcal{F}_i(t) &=& \{rad_i(t) \ge rad'_i(t)\}.
\end{eqnarray*}

Since $\hat{s}_i(t) \ge s_i + rad_i(t)$ implies either $\{rad_i(t) < rad'_i(t)\}$ or $\{rad_i(t) \ge rad'_i(t)\}\cap \{ |\hat{s}_i(t) - s_i | \ge rad'_i(t)\}$, we know that
\begin{eqnarray}
\mathcal{A}_i(t) \subseteq \neg \mathcal{F}_i(t) \cup \mathcal{E}_i(t).\label{eq:A-eq}
\end{eqnarray}
Similarly,we also have that
\begin{eqnarray}
\mathcal{B}(t) \subseteq \neg \mathcal{F}_1(t) \cup \mathcal{E}_1(t).\label{eq:B-eq}
\end{eqnarray}

Now we consider the event $\mathcal{D}_i(t)$.
When the three events $\mathcal{D}_i(t)$, $\neg \mathcal{E}_i(t)$ and $ \mathcal{F}_i(t)$ happens, we have that
\begin{eqnarray}
\nonumber s_i + 2rad_i(t) &= & (s_i + rad_i(t) )+ rad_i(t)\\
\label{eq:1000}&\ge&s_i + rad'_i(t) + rad_i(t)\\
\label{eq:1001}&\ge&\hat{s}_i(t) + rad_i(t) \\
\label{eq:1002}&\ge& 1\\
\label{eq:1003}&\ge& s_i + \Delta_i,
\end{eqnarray}
where Eq. \eqref{eq:1000} comes from the fact that $\mathcal{F}_i(t)$ happens, Eq. \eqref{eq:1001} comes from the fact that $\neg \mathcal{E}_i(t)$ happens, Eq. \eqref{eq:1002} comes from the fact that $\mathcal{D}_i(t)$ happens, and Eq. \eqref{eq:1003} is because that $s_i + \Delta_i = s_1 \le 1$ as we have assumed in our model setting. 

This means that $2rad_i(t) \ge \Delta_i$, i.e., the event $\mathcal{C}_i(t)$ must happen if we have $\mathcal{D}_i(t) \cap \neg \mathcal{E}_i(t) \cap \mathcal{F}_i(t)$. Therefore, $\mathcal{D}_i(t) \cap \neg \mathcal{E}_i(t) \cap \mathcal{F}_i(t) \subseteq \mathcal{C}_i(t)$, which implies that 
\begin{eqnarray}
\mathcal{D}_i(t) \subseteq \mathcal{E}_i(t) \cup  \neg \mathcal{F}_i(t) \cup \mathcal{C}_i(t).\label{eq:D-eq}
\end{eqnarray}

From equations \eqref{eq:A-eq}, \eqref{eq:B-eq} and \eqref{eq:D-eq},  we know that 
\begin{equation*}
\Pr[\mathcal{A}_i(t) \cup \mathcal{B}(t) \cup \mathcal{C}_i(t) \cup \mathcal{D}_i(t)] \le \Pr[\neg \mathcal{F}_i(t)\cup \mathcal{E}_i(t)\cup \neg \mathcal{F}_1(t)\cup \mathcal{E}_1(t)\cup \mathcal{C}_i(t)].
\end{equation*}

Therefore, 
\begin{eqnarray}
	\nonumber Reg_i(T) &\le& \E_{\mathcal{U}}\left[ \sum_{t\in \mathcal{U}}\Pr[\neg \mathcal{F}_i(t)\cup \mathcal{E}_i(t)\cup \neg \mathcal{F}_1(t)\cup \mathcal{E}_1(t)\cup \mathcal{C}_i(t)]\Delta_i z(t)\right]\\
\nonumber &\le& \E_{\mathcal{U}}\left[ \sum_{t\in \mathcal{U}}\Pr[ \mathcal{E}_i(t)\cup \mathcal{E}_1(t)\cup \mathcal{C}_i(t)]\Delta_i z(t)\right] + \E_{\mathcal{U}}\left[ \sum_{t\in \mathcal{U}}\Pr[\neg \mathcal{F}_i(t)\cup \neg \mathcal{F}_1(t)]\Delta_i z(t)\right] 
\end{eqnarray}


The next lemma (Lemma \ref{Lemma_2main}) shows that there exists a constant $T^*$ (which does not depend on $T$) such that $\Pr[\neg \mathcal{F}_i(t)\cup \neg \mathcal{F}_1(t)] = 0$ for any $t \ge T^*$.

\begin{lemma}\label{Lemma_2main}
    If $f$ is an increasing function, then there exists $T^* = c_f(d_1, d_2, N,\alpha)$ such that for any $i$ and any $t \ge T^*$,  $rad_i(t) \ge rad'_i(t)$.
\end{lemma}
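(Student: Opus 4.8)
The plan is to convert the inequality $rad_i(t)\ge rad'_i(t)$ into a statement purely about $K_i(t)$ and $t$, and then split into two regimes according to the size of $K_i(t)$. Write $K=K_i(t)$, $N=N_i(t)=F(K)$ (as noted in the excerpt, at a decision step $N_i(t)=F(K_i(t))$), and factor $\sqrt{\log t/N}$ out of the definitions of $rad_i$ and $rad'_i$; since $\sqrt{12d_2K\log t/N^2}=\sqrt{12d_2K/N}\cdot\sqrt{\log t/N}$, the claim $rad_i(t)\ge rad'_i(t)$ is equivalent to
\[
\Bigl(\sqrt{\alpha}-2-\sqrt{12d_2K/N}\Bigr)\sqrt{\log t/N}\;\ge\;d_1\frac{K}{N}.
\]
The two structural facts I will use both follow from $f$ being an increasing map into the positive integers: first $f(k)\ge k$ for all $k$ (induction, since consecutive integer values differ by at least $1$), hence $N=F(K)\ge K(K+1)/2$; consequently $K/N\le 2/(K+1)$ and $K/\sqrt N\le\sqrt2$ (the latter is the bound already quoted in the proof sketch of Theorem~\ref{Theorem_UCB}).

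\emph{Regime 1: $K\ge k_1$}, where $k_1=k_1(d_2,\alpha)$ is chosen so that $24d_2/(k_1+1)\le(\sqrt\alpha-2)^2/4$; then $\sqrt{12d_2K/N}\le\sqrt{24d_2/(K+1)}\le(\sqrt\alpha-2)/2$, so the bracket in the displayed inequality is at least $(\sqrt\alpha-2)/2>0$. It therefore suffices that $\tfrac{\sqrt\alpha-2}{2}\sqrt{\log t/N}\ge d_1K/N$, i.e.\ $\tfrac{\sqrt\alpha-2}{2}\sqrt{\log t}\ge d_1K/\sqrt N$, and since $K/\sqrt N\le\sqrt2$ this holds as soon as $t\ge t_1:=\exp\!\bigl(8d_1^2/(\sqrt\alpha-2)^2\bigr)$, a constant depending only on $d_1,\alpha$.

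\emph{Regime 2: $K_i(t)<k_1$.} Here the goal is to show this simply cannot occur once $t$ exceeds a constant. If $K_i(s)<k_1$ then $N_i(s)=F(K_i(s))\le n_1:=F(k_1-1)$, a constant, so for $s\ge t_0:=e^{n_1/\alpha}$ we get $rad_i(s)=\sqrt{\alpha\log s/N_i(s)}\ge1$, hence $u_i(s)=\min\{\hat s_i(s)+rad_i(s),1\}=1$ \emph{irrespective of the (random) rewards}. Now invoke the selection rule of Algorithm~\ref{Algorithm_UCB}: at a decision step $s\ge t_0$ with $K_i(s)<k_1$, since $u_i(s)=1$ is the largest value any $u_j$ can take, arm $i$ is played unless another arm $j$ also has $u_j(s)=1$ and wins the ``smallest $N_j$'' tie-break, which forces $N_j(s)\le N_i(s)\le n_1$, i.e.\ $K_j(s)\le F^{-1}(n_1)\le n_1$ (using $F(K)\ge K$). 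Over the whole run each fixed arm $j$ is chosen with $K_j\le n_1$ at most $n_1$ times (each choice increments $K_j$), so arm $i$ is ``blocked'' at most $(N-1)n_1$ times at such decision steps, and it is itself chosen at most $k_1$ times there before $K_i$ crosses $k_1$. Hence at most $k_1+(N-1)n_1+O(1)$ rounds elapse after $t_0$ before $K_i$ reaches $k_1$, and each such round has size $\le f(n_1)$; therefore $K_i(t)\ge k_1$ for every $t\ge T^*:=t_0+\bigl(k_1+(N-1)n_1+2\bigr)f(n_1)$, and this persists since $K_i$ is nondecreasing. Taking $c_f(d_1,d_2,N,\alpha):=\max\{t_1,T^*\}$ completes the proof: for $t\ge c_f(\cdot)$ every arm is in Regime 1, so $rad_i(t)\ge rad'_i(t)$; and one reads off that $c_f$ depends on $f,d_1,d_2,N,\alpha$ but not on $T$.

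\emph{Main obstacle.} Regime 2 is the real content: the dynamical statement that every arm's round counter $K_i(t)$ reaches any prescribed level within a horizon-independent number of steps. The naive bound $K/N\le2/(K+1)$ is useless when $K$ is small (e.g.\ $K=2$ with $d_2$ large), so one must genuinely exploit that ARS-UCB caps $u_i$ at $1$ and breaks ties toward the least-pulled arm --- this is what forces an under-pulled arm to be revisited, and it is robust to the randomness in the feedback precisely because a large $rad_i(t)$ pins $u_i(t)$ to $1$ regardless of the empirical mean. The rest is careful bookkeeping: checking that the ``$(N-1)n_1$ blocking rounds'' bound is a single global budget across the whole run and that the time spent in those rounds is bounded by a constant.
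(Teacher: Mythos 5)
Your proposal is correct and follows essentially the same route as the paper's proof: the same reduction of $rad_i(t)\ge rad'_i(t)$ to the two conditions that the $d_1$ term is absorbed using $K_i(t)/\sqrt{F(K_i(t))}\le\sqrt{2}$ once $t$ exceeds a constant $t_1$, and that the $d_2$ term is absorbed once $K_i(t)$ (equivalently $N_i(t)$) exceeds a constant threshold, which you then force within horizon-independent time exactly as the paper does, by noting that an under-pulled arm has $u_i(t)=1$ for $t\ge e^{n_1/\alpha}$ and that the smallest-$N_i$ tie-break can only be blocked boundedly many times per arm. Your explicit value of $T^*$ has minor bookkeeping slack (an off-by-one in the blocked-round size and the round already in progress at $t_0$, which may exceed $f(n_1)$ but is still bounded by a horizon-independent constant), which does not affect the conclusion.
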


Then we can divide the game into two phases: the phase that lasts until the round contains $T^*$ stops, and the phase that contains the remaining time steps. Let $\ell(T^*)$ be the size of the round which contains time step $T^*$.

The regret in the first phase is upper bounded by $Reg^{(1)}(T) \le T^* + \E[\ell(T^*)]$. Note that the property ii) of $f$ in Theorem \ref{Theorem_UCB} states that $f(k+1) \le F(k)$ for any $k > k_0$, then 
\begin{eqnarray*}
\E[\ell(T^*)] &\le& \E[\ell(T^*)| \ell(T^*) \le f(k_0)] + \E[\ell(T^*)| \ell(T^*) > f(k_0)] \\
&\le& f(k_0) + T^*.
\end{eqnarray*}

Now we consider the second phase, and denote $Reg_i^{(2)}(T)$ as the regret comes from pulling arm $i$ in the second phase. Since in the second phase $\neg \mathcal{F}_i(t)$ can not happens (Lemma \ref{Lemma_2main}), we have that:
\begin{eqnarray}
\nonumber  Reg_i^{(2)}(T) &\le& \E_{\mathcal{U}}\left[ \sum_{t\in \mathcal{U}, t > T^*}\Pr[\mathcal{E}_i(t)\cup \mathcal{E}_1(t)\cup \mathcal{C}_i(t)]\Delta_i z(t)\right]\\
\nonumber &\le& \E_{\mathcal{U}}\left[ \sum_{t\in \mathcal{U}, t > T^*}  \I[z(t) > f(k_0)]z(t)\Delta_i  \Pr[\mathcal{E}_i(t)\cup \mathcal{E}_1(t)] \right] \\
\nonumber &&+ \E_{\mathcal{U}}\left[ \sum_{t\in \mathcal{U}, t > T^*}  \I[z(t) > f(k_0)]z(t)\Delta_i  \Pr[\mathcal{C}_i(t)] \right] \\
\nonumber &&+ \E_{\mathcal{U}}\left[ \sum_{t\in \mathcal{U}, t > T^*}  \I[z(t) \le f(k_0)]z(t)]  \right]\\
\nonumber &\le&\E_{\mathcal{U}}\left[ \sum_{t\in \mathcal{U}, t > T^*}  \I[z(t) > f(k_0)]z(t)\Delta_i  \Pr[\mathcal{E}_i(t)\cup \mathcal{E}_1(t)] \right] \\
\nonumber &&+ \E_{\mathcal{U}}\left[ \sum_{t\in \mathcal{U}, t > T^*}  \I[z(t) > f(k_0)]z(t)\Delta_i  \Pr[\mathcal{C}_i(t)] \right] + \sum_{k=1}^{k_0} f(k)  \\
\nonumber &\le& \E_{\mathcal{U}}\left[ \sum_{t\in \mathcal{U}, t > T^*}  \I[z(t) > f(k_0)]z(t)\Delta_i  \Pr[\mathcal{E}_i(t)\cup \mathcal{E}_1(t)] \right] \\
\label{Eq_12221}&&+ \E_{\mathcal{U}}\left[ \sum_{t\in \mathcal{U}, t > T^*}  \I[z(t) > f(k_0)]z(t)\Delta_i  \Pr[\mathcal{C}_i(t)] \right] + F(k_0) . 
\end{eqnarray}

Consider the first term in Eq. \eqref{Eq_12221}, we have that
\begin{eqnarray}
\E_{\mathcal{U}}\left[ \sum_{t\in \mathcal{U}, t > T^*}  \I[z(t) > f(k_0)]z(t)\Delta_i  \Pr[\mathcal{E}_i(t)\cup \mathcal{E}_1(t)] \right]
\label{eq:1010}&\le &\E_{\mathcal{U}}\left[ \sum_{t\in \mathcal{U}, t > T^*}  t\Delta_i  \Pr[\mathcal{E}_i(t)\cup \mathcal{E}_1(t)] \right]\\
\label{eq:1011}&\le& \E_{\mathcal{U}}\left[ \sum_{t\in \mathcal{U}, t > T^*}  t\Delta_i  {4\over t^3}\right]\\
\nonumber &\le& \E_{\mathcal{U}}\left[ \sum_{t\in \mathcal{U}, t > T^*}  {4\over t^2}\right]\\
\nonumber &\le& 8,
\end{eqnarray}
where Eq. \eqref{eq:1010} comes from the fact that $\I[z(t) > f(k_0)]z(t) \le t$ (note that $f(k+1) \le F(k)$ for any $k > k_0$) and Eq. \eqref{eq:1011} comes from Lemma \ref{Lemma_1main}.

Then we consider the second term in Eq. \eqref{Eq_12221}. Since $\mathcal{C}_i(t)$ only happens when $N_i(t) \le {4\alpha\log T \over \Delta_i^2}$, we have that
\begin{eqnarray}
\nonumber \E_{\mathcal{U}}\left[ \sum_{t\in \mathcal{U}, t > T^*}  \I[z(t) > f(k_0)]z(t)\Delta_i  \Pr[\mathcal{C}_i(t)] \right]
&\le& \Delta_i \left({4\alpha\log T \over \Delta_i^2} + \E\left[\I[\ell_i > f(k_0)]\ell_i\right]\right),
\end{eqnarray}
where $\ell_i$ represents the last round size that $\mathcal{C}_i(t)$ happens. 

Since $f(k+1) \le F(k)$ for any $k > k_0$, we have $\E\left[\I[\ell_i > f(k_0)]\ell_i\right] \le {4\alpha\log T \over \Delta_i^2}$ as well, therefore:
\begin{equation*}
\E\left[\sum_{i=2}^N \sum_{t\in \mathcal{U}, t \ge T^*}^T \I[z(t) > f(k_0)]z(t)\Delta_i\I[\mathcal{C}_i(t)]\right] \le {8\alpha\log T \over \Delta_i}.
\end{equation*}

Adding all these three terms together, and summing over all the sub-optimal arms, we have that:
\begin{eqnarray*}
Reg(T) &\le& Reg^{(1)}(T) + \sum_{i=2}^N Reg^{(2)}_i(T) \\
&\le& 2T^* + f(k_0) + 8(N-1) + (N-1)F(k_0) + \sum_{i=2}^N {8\alpha\log T \over \Delta_i}.
\end{eqnarray*}

Setting $c_f^*(d_1,d_2,N,\alpha) = 2T^* + 8N + NF(k_0)$, then we know that the total regret is upper bounded by:
\begin{equation*}
    Reg(T) \le \sum_{i=2}^N {8\alpha \log T \over \Delta_i} + c_f^*(d_1,d_2,N,\alpha).
\end{equation*}

\subsection{Proof of Lemma \ref{Lemma_1main}}\label{sub_section_sto_proof2}

Define $M'_i(k) \triangleq \sum_{t = 1}^\infty \I[K_i(t) = k, a(t) = i]Y(t)$, i.e., the observed cumulative reward of arm $i$ in its $k$-th round. Similarly, define $L'_i(k) \triangleq \sum_{t = 1}^\infty \I[K_i(t) = k, a(t) = i]||\bm{r}_{a(t)}(t)||_1$, i.e., the real cumulative reward of arm $i$ in its $k$-th round. Then at the end of each round, we always have $M_i(t) = \sum_{k=1}^{K_i(t)}M'_i(k)$ and $L_i(t) = \sum_{k=1}^{K_i(t)}L'_i(k)$.

Note that in each round we only pull one arm $i$, then by equation \eqref{eq_111}, we always have 
\begin{eqnarray}
\label{eq_222} M'_i(k) = L'_i(k) + \sum_{t \le t_1-1} \sum_{\tau = t_1-t}^{\infty} r_{a(t),\tau}(t) - \sum_{t \le t_2}\sum_{\tau = t_2-t+1}^{\infty} r_{a(t),\tau}(t),
\end{eqnarray} 
where $t_1$ is the start of that round, and $t_2$ is the end of that round. 
Taking expectation on the both sides in equation \eqref{eq_222}, and recall that $d_1 \triangleq \sum_{d' = 1}^d \max_i \E[\sum_{\tau = d'}^{d} r_{i,\tau}]$, we obtain 
\begin{eqnarray*}
\E[M'_i(k)] &=& \E[L_i'(k)] + \E\left[\sum_{t \le t_1-1} \sum_{\tau = t_1-t}^{\infty} r_{a(t),\tau}(t) - \sum_{t \le t_2}\sum_{\tau = t_2-t+1}^{\infty} r_{a(t),\tau}(t)\right]\\
&=& f(k)s_i + \E\left[\sum_{t \le t_1-1} \sum_{\tau = t_1-t}^{\infty} r_{a(t),\tau}(t) - \sum_{t \le t_2}\sum_{\tau = t_2-t+1}^{\infty} r_{a(t),\tau}(t)\right]\\
&\ge& f(k)s_i - d_1.
\end{eqnarray*}

%
Summing over $k=1$ to $K_i(t)$, we have that
\begin{equation}\label{eq_223}\E[M_i(t)] \ge N_i(t)s_i - K_i(t)d_1.\end{equation}

Similarly, taking variance on the both sides in equation \eqref{eq_222}, and recall that $d_2 \triangleq \sum_{d' = 1}^d \max_i \Var[\sum_{\tau = d'}^{d} r_{i,\tau}]$, we have  
\begin{eqnarray*}
\Var[M'_i(k)] &=& \Var[L_i'(k)] + \Var\left[\sum_{t \le t_1-1} \sum_{\tau = t_1-t}^{\infty} r_{a(t),\tau}(t) - \sum_{t \le t_2}\sum_{\tau = t_2-t+1}^{\infty} r_{a(t),\tau}(t)\right]  \\
&\le& {f(k) \over 4} + \Var\left[\sum_{t \le t_1-1} \sum_{\tau = t_1-t}^{\infty} r_{a(t),\tau}(t) - \sum_{t \le t_2}\sum_{\tau = t_2-t+1}^{\infty} r_{a(t),\tau}(t)\right]  \\
&\le& {f(k) \over 4} + 2d_2,
\end{eqnarray*}

%
Summing over $k=1$ to $K_i(t)$, we have that \begin{equation}\label{eq_224}\Var[M_i(t)] \le {N_i(t) \over 4} + 2K_i(t)d_2.\end{equation}

By inequalities \eqref{eq_223} and \eqref{eq_224}, using Bernstein's inequality, for any $\Delta \in [0,1]$, we have that:
\begin{eqnarray*}
\Pr\left[{M_i(t) \over N_i(t)} \le s_i - {K_i(t) \over N_i(t)}d_1 - \Delta\right]
&=&\Pr[M_i(t) \le N_i(t)s_i - K_i(t)d_1 - N_i(t)\Delta] \\
&\le& \exp\left(-{3N_i(t)^2\Delta^2 \over 6\Var[M_i(t)] + 2N_i(t)\Delta}\right)\\
&\le& \exp\left(-{3N_i(t)^2\Delta^2 \over 4N_i(t) + 12K_i(t)d_2}\right).
\end{eqnarray*}

Then, we can set $\Delta = \min\{1, \sqrt{4\log t \over N_i(t)} + \sqrt{12d_2K_i(t)\log t\over N_i(t)^2}\}$, which implies that $\Pr[\hat{s}_i(t) < s_i - rad'_i(t)] \le {1\over t^3}$.

Similarly, we also have $\Pr[\hat{s}_i(t) > s_i + rad'_i(t)]\le {1\over t^3}$, which completes the proof of this lemma.

\subsection{Proof of Lemma \ref{Lemma_2main}}\label{sub_section_sto_proof3}

When $f$ is an increasing function, we must have $F(K) \ge {K(K+1)\over 2}$. Therefore, $\lim_{K\to\infty}{K\over F(K)} = 0$ and ${K\over \sqrt{F(K)}} \le 2$.

Note that it is sufficient to prove that there exists some $T^*$ such that $\forall t \ge T^*$, the following two inequalities always hold: 
\begin{eqnarray}\label{eq_888}\left({\sqrt{\alpha} - 2 \over 2}\right) \sqrt{\log t\over N_i(t)} &\ge& d_1 {K_i(t) \over N_i(t)},\\
\label{eq_889}\left({\sqrt{\alpha} - 2 \over 2}\right) \sqrt{\log t\over N_i(t)} &\ge& \sqrt{12d_2K_i(t)\log t \over N_i(t)^2}.\end{eqnarray}

Eq.  \eqref{eq_888} is the same as 
\begin{equation*}
\left({\sqrt{\alpha} - 2 \over 2}\right) \sqrt{\log t} \ge d_1 {K_i(t) \over \sqrt{N_i(t)}} = d_1 {K_i(t) \over \sqrt{F(K_i(t))}}.
\end{equation*} 

Since ${k\over \sqrt{F(k)}} \le 2$, there exists $t_1 =\exp\left(\bigg({4d_1 \over \sqrt{\alpha }-2}\bigg)^2\right)$ such that for any $t \ge t_1$, we always have 
\begin{equation*}
\left({\sqrt{\alpha} - 2 \over 2}\right) \sqrt{\log t} \ge \left({\sqrt{\alpha} - 2 \over 2}\right) \sqrt{\log t_1} = 2d_1 \ge d_1{K_i(t) \over \sqrt{N_i(t)}}.
\end{equation*}

Now we consider Eq. \eqref{eq_889}, which is the same as
\begin{equation}\label{eq_890}
\sqrt{12d_2K_i(t)\over N_i(t)} = \sqrt{12d_2K_i(t)\over F(K_i(t))} \le {\sqrt{\alpha} - 2 \over 2}.
\end{equation}

Since ${K_i(t) \over F(K_i(t))}$ converges to $0$ as $K_i(t)$ (or equivalently $N_i(t)$) goes to infinity, there must exist some $N(d_2)$ such that for any $N_i(t)> N(d_2)$, Eq. \eqref{eq_890} always holds. 
Therefore, we only need to find some $t_2$ such that under Algorithm \ref{Algorithm_UCB}, each arm is pulled for at least $N(d_2)$ times. Note that for $t > \exp({N(d_2)\over \alpha})$,  any arm with $N_i(t) \le N(d_2)$ must have $u_i(t) = 1$. 
This implies that either arm $i$ will be pulled, or arm $j$ with $N_j(t) \le N_i(t)$ (so that $u_i(t)=u_j(t)=1$) will be pulled. Thus, after $t_2 = \exp({N(d_2)\over \alpha}) + N (F(k_0)+ 2N(d_2))$ time slots, every arm must be pulled for at least $N(d_2)$ times, i.e., for any $t \ge t_2$, Eq. \eqref{eq_889} always holds. 

Let $T^*= \max \{t_1, t_2\} = c_f(d_1, d_2, N, \alpha) $, we know that $rad_i(t) \ge rad'_i(t)$ for any $i$ and $t \ge T^*$.

\section{Proof of Theorem \ref{Theorem_EXP3}}\label{Section_proof_2}

Firstly, we define $L_i(t) = \sum_{\tau = 1}^t ||r_i(\tau)||_1$ as the actual reward gained from arm $i$ until time $t$, and $G(K) = \sum_{k=1}^K g(k)$ to be the total time steps until the $k$-th round finishes. 
We also set $\hat{z}_i(k) = \I[a(k) = i]{Z'(k)\over p_{i}(k)}$ to simplify  writing. 
Then similar to the analysis of EXP3 \cite{Auer2002The}, we have the following three inequalities:
\begin{equation}
\label{eq_449}{\gamma \over N} {\hat{z}_i(k) \over g(K)} \le p_i(k){\hat{z}_i(k) \over g(K)} = \I[a(k) = i]{Z'(k)\over g(K)} \le 1.
\end{equation}
\begin{equation}
\label{eq_450}\sum_{i=1}^N p_i(k){\hat{z}_i(k) \over g(K)} = {Z'(k)\over g(K)}.
\end{equation}
\begin{equation}
\label{eq_451}\sum_{i=1}^N p_i(k)\left({\hat{z}_i(k) \over g(K)}\right)^2 = {Z'(k) \over g(K)^2}\I[a(k) = i]\hat{z}_i(k) \le {1 \over g(K)}\sum_{i=1}^N \hat{z}_i(k) .
\end{equation}

We set $w_i(k)$ to be the the value of $w_i$ at the beginning of round $k$, $e_i(k)$ to be the value of $e_i$ in round $k$. $E(k)$ is the sum of all $e_i(k)$, i.e., $E(k) = \sum_{i=1}^N e_i(k)$.


Then we can write the following inequality:
\begin{eqnarray}
\nonumber {E(k+1) \over E(k)}
&=& \sum_{i=1}^N {e_i(k+1) \over E(k)}\\
\nonumber &=&\sum_{i=1}^N {1 \over E(k)} \exp\left({w_i(k+1)\over g(K)} \right)\\ 
\nonumber &=&\sum_{i=1}^N {1 \over E(k)} \exp\left({w_i(k) + {\gamma \over N}\hat{z}_i(k)\over g(K)} \right)\\ 
\nonumber &=&\sum_{i=1}^N {e_i(k) \over E(k)} \exp\left( {{\gamma \over N}\hat{z}_i(k)\over g(K)}\right)\\ 
\label{eq_444}&=& \sum_{i=1}^N {p_i(k) - {\gamma \over N} \over 1-\gamma} \exp\left({\gamma \over N} {\hat{z}_i(k) \over g(K)}\right)\\
\label{eq_445}&\le& \sum_{i=1}^N {p_i(k) - {\gamma \over N} \over 1-\gamma} \left(1 + {\gamma \over N} {\hat{z}_i(k) \over g(K)} + (e-2) ({\gamma \over N} {\hat{z}_i(k) \over g(K)})^2 \right)\\
\nonumber&\le& 1 + { {\gamma \over N} \over 1-\gamma} \sum_{i=1}^N p_i(k) {\hat{z}_i(k) \over g(K)} + {(e-2) ({\gamma \over N})^2 \over 1-\gamma} \sum_{i=1}^N p_i(k) \left({\hat{z}_i(k) \over g(K)}\right)^2\\
\label{eq_446}&\le& 1+ { {\gamma \over N} \over 1-\gamma}{Z'(k)\over g(K)} + {(e-2) ({\gamma \over N})^2 \over 1-\gamma} {1\over g(K)}\sum_{i=1}^N \hat{z}_i(k),
\end{eqnarray}
where Eq. \eqref{eq_444} comes from the fact that $p_i(k) = (1-\gamma){e_i(k) \over E_i(k)} + {\gamma \over N}$, Eq. \eqref{eq_445} is because of the fact that $\exp(x) \le 1 + x + (e-2)x^2$ for any $0\le x \le 1$ and Eq. \eqref{eq_449}, and Eq. \eqref{eq_446} comes from Eq. \eqref{eq_450} and Eq. \eqref{eq_451}.

Therefore,
\begin{equation*}
	\log {E(K) \over E(0)} \le {{\gamma \over N} \over 1-\gamma} \sum_{k=1}^K {Z'(k)\over g(K)} + {(e-2) ({\gamma \over N})^2 \over 1-\gamma} {1\over g(K)} \sum_{k=1}^K\sum_{i=1}^N \hat{z}_i(k).
\end{equation*}

On the other hand, for any arm $i$, we have that:
\begin{equation*}
	\log{E(K) \over E(0)} \ge \log {e_i(K) \over E(0)} = {\gamma \over N} \sum_{k=1}^K {\hat{z}_i(k)\over g(K)} - \log N.
\end{equation*}

Thus, 
\begin{equation}
\sum_{k=1}^K Z'(k) \ge (1-\gamma) \sum_{k=1}^K \hat{z}_i(k) - g(K){N\log N \over \gamma} - (e-2) {\gamma \over N} \sum_{k=1}^K\sum_{i=1}^N \hat{z}_i(k).\label{eq:z-ineq-0}
\end{equation}

Notice that by definitions of $Z'(k)$ (in in Algorithm \ref{Algorithm_EXP3}) and $\hat{z}_i(k)$, we have that:
\begin{eqnarray*}
\E[\hat{z}_i(k)|a(1),\cdots a(G(k)-1)]
&=& \min\{ g(k), \sum_{t=G(k-1) + 1}^{G(k)} ||r_i(t)||_1 - \sum_{t = G(k) - d}^{G(k)} \sum_{\tau = G(k) - t + 1}^d r_{a(t),\tau}(t) \\
&& + \sum_{t = G(k-1) - d}^{G(k-1)} \sum_{\tau = G(k-1) - t + 1}^d r_{a(t),\tau}(t)\}.
\end{eqnarray*}

This implies that
\begin{eqnarray*}
\sum_{t=G(k-1) + 1}^{G(k)} ||r_i(t)||_1 - d &\le& \E[\hat{z}_i(k)|a(1),\cdots a(G(k)-1)]
 \le \sum_{t=G(k-1) + 1}^{G(k)} ||r_i(t)||_1 + d.
\end{eqnarray*}

Summing the above equation over rounds $1, ..., k$, we obtain  
\begin{equation*}
L_i(G(k)) - k d \le \sum_{j=1}^k \E[\hat{z}_i(j)|a(1),\cdots a(G(j)-1)] \le L_i(G(k)) + kd .
\end{equation*}

Then, using Eq. \eqref{eq:z-ineq-0}, we have:  
\begin{eqnarray*}
\E\left[\sum_{k=1}^K Z'(k)\right] &\ge& (1-\gamma) \left(L_i(G(K)) - Kd\right) - g(K){N\log N \over \gamma}  - (e-2){\gamma \over N}\sum_{i=1}^N \left(L_i(G(K)) + Kd\right) .
\end{eqnarray*}

Let $i$ be the arm such that $L_i(G(K)) \ge L_j(G(K))$ for any $j\ne i$. Then, 
\begin{eqnarray}
\nonumber L_i(G(K)) - \E\left[\sum_{k=1}^K Z(k)\right]
&\le& L_i(G(K)) - \E\left[\sum_{k=1}^K Z'(k)\right] \\
&\le& (e-1)\gamma L_i(G(K)) + g(K){N\log N \over \gamma} + ((e-2)\gamma + 1-\gamma)Kd.\label{EQ_99910}
\end{eqnarray}

Note that the left-hand-side of Eq. \eqref{EQ_99910} is the cumulative  regret in the first $K$ rounds (or the first $G(K)$ time steps), and if we choose $\gamma = \min \{1, \sqrt{(N\log N) \over (e-1)K}\}$,  the right-hand-side of Eq. \eqref{EQ_99910} is upper bounded by $3g(K)\sqrt{N\log N K} + 2dK$. Thus, the cumulative regret in the first $G(K)$ time steps has an upper bound of $3g(K)\sqrt{N\log N K} + 2dK$. 

In the remaining time steps, the total regret is less than $g(K+1)$. 
Therefore, the total cumulative regret is upper bounded by $3g(K)\sqrt{N\log N K} + 2dK + g(K+1)$. 

Now if we choose $g(k)$ to be $k^\beta$ with $\beta\in[0, 1]$, then $G(k) = {1\over \beta + 1} k^{\beta + 1}$ and  $K = ((\beta + 1) T)^{1\over \beta + 1}$, which means that the total cumulative regret is upper bounded by:
\begin{equation*}
Reg(T) \le 3g(K)\sqrt{N\log N K} + 2dK + g(K+1) = O((N\log N)^{1\over 2} T^{2\beta + 1\over 2\beta + 2}+ dT^{1\over \beta + 1}). 
\end{equation*}

\section{Oblivious Adversarial MAB with Composite and Anonymous  Rewards}\label{Sub-CLW-2}



We also consider the oblivious adversarial MAB model with composite and anonymous rewards \cite{Bianchi2018Nonstochastic}, and present an ARS-CLW policy to achieve a regret upper bound that can be arbitrarily close to $O(\sqrt{NT \log N})$ without any prior knowledge about the maximum delay. 

In this setting, all the reward vectors $\bm{r}_i(t)$'s are decided at the beginning of the game, and  the environment cannot change these reward vectors after some actions chosen by the algorithm. \cite{Bianchi2018Nonstochastic} propose a randomized method called CLW to divide the game into many rounds, and only use partial feedback from each round. 
The randomness makes sure that in expectation, the reward in each time slot (in the same round) is the same, no matter whether it is counted into the algorithm or not. Only using partial feedback allows one to ignore 
the differences between $L_i(t)$ and $M_i(t)$. 
This means that one can use the EXP3 algorithm directly with the partial rewards as input. Based on this design, the CLW algorithm achieves an $O(\sqrt{NT\log N})$ regret upper bound. 

Our algorithm in this case, Adaptive Round-Size CLW (ARS-CLW), is shown in Algorithm \ref{Algorithm_Ext} in details, which extends the CLW algorithm  to handle the case when the reward interval size is unknown. The ARS-CLW algorithm  does not need $d$ as an input. Instead, it uses a function $h: \mathbb{N}_+ \to \mathbb{N}_+$ to estimate the value of $d$. 
Precisely, at the beginning, we set $T = T^{(1)}$, and guess $d^{(1)} = h(T^{(1)})$. If the game does not stop at $T^{(1)}$, then after $T^{(1)}$ we set $T = T^{(2)} = 2T^{(1)}$ and guess $d^{(k)} = h(T^{(2)})$. We call $[T^{(k-1)}, T^{(k)}]$ as the $k$-th phase in the algorithm (define $T^{(0)} = 0$). To make the estimated $d^{(k)}$ increasing, we constraint that $h$ must be an increasing function.  

Compare to the CLW algorithm in \cite{Bianchi2018Nonstochastic}, in each phase $k$ of our ARS-CLW policy, we modify the method to decide whether $t \in \mathcal{U}^{(k)}$, i.e.,  whether to end the current round in this time step (lines $10$-$13$ in Algorithm \ref{Algorithm_Ext}) (here $\mathcal{U}^{(k)}$ denotes the set of round-ending times).  
This modification is introduced to make sure that during each phase $k$, $\forall t, \Pr[t\in \mathcal{U}^{(k)}] = p$ for some constant $p$. 
This will allow us to use the inequalities of $p$ directly from the analysis in  \cite{Bianchi2018Nonstochastic}, and does not affect the regret bound of CLW.\footnote{If we only have $\Pr[t\in \mathcal{U}^{(k)}] \leq p$, when deriving bounds for the regret, we need to ensure that the probability is multiplied with positive values and this can be challenging.} 
Note that since we maintain a constant $\Pr[t\in \mathcal{U}^{(k)}]$ in phase $k$, the expected round size remains a constant in phase $k$. As a result, we cannot imitate ARS-UCB and ARS-EXP3 by increasing the round size continuously in ARS-CLW. 
Thus, our modified algorithm cannot change the size of the rounds frequently. That is why we choose to divide the game into several phases, and treat each phase as an independent game with unique $T$ and $d$.

\begin{algorithm}[t]
    \centering
    \caption{Adaptive Round-Size CLW Algorithm}\label{Algorithm_Ext}
    \begin{algorithmic}[1]
    \STATE \textbf{Input: } $h$, $T^{(1)}$, $k = 1$.
    \WHILE {$t \le T$}
    \STATE $d^{(k)} = h(T^{(k)})$, $\gamma^{(k)} = \sqrt{2d^{(k)} N\log N \over T^{(k)}+d^{(k)}}$, $q^{(k)} = {1\over 2d^{(k)}}$.
    \STATE Set $p^{(k)}$ to be a uniform distribution on $\{1,\cdots,N\}$.
    \STATE Draw $a^{(k)}(t-1) \sim p$.
    \STATE Generate $2d^{(k)} - 1$ i.i.d Bernoulli random variables $B^{(k)}_t,\cdots,B^{(k)}_{t+2d^{(k)}-2}$ with parameter $q^{(k)} $.
    \WHILE {$t < T^{(k)} $}
    \STATE If $t-1 \in \mathcal{U}^{(k)}$, then pick $a^{(k)}(t) \sim p$. Otherwise $a^{(k)}(t) = a^{(k)}(t-1)$.
    \STATE Generate Bernoulli random variable $B^{(k)}_{t+2d^{(k)} -1}$ with parameter $q^{(k)} $.
    \IF {$B^{(k)}_t = 1$, $B^{(k)}_{t+1} = \cdots = B^{(k)}_{t+2d^{(k)} -1} = 0$}
    \STATE Set $t \in \mathcal{U}^{(k)}$.
    \STATE Update $p$ using EXP3 policy by pulling arm $a(t)$ and obtain reward ${1\over 2d^{(k)} }\sum_{\tau=t-d^{(k)} +1}^t Z(\tau)$.
    \ENDIF
    \STATE $t\gets t+1$
    \ENDWHILE
    \STATE $k\gets k+1$
    \STATE $T^{(k)}  \gets 2 T^{(k-1)}$
    \ENDWHILE
    \end{algorithmic}
\end{algorithm}

Now we present the regret upper bound of ARS-CLW:

\begin{theorem}\label{Theorem_Ext}
	If $h: \mathbb{N}_+ \to \mathbb{N}_+$ is an increasing function such that $h(T) < T$ holds for any $T$, then Algorithm \ref{Algorithm_Ext} achieves: 
	\begin{equation*}Reg(T) = O(\sqrt{Th(2T)N\log N} + h^{-1}(d) ). \end{equation*}
\end{theorem}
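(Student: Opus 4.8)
The plan is to exploit the doubling structure of the phases and reduce the analysis, phase by phase, to the CLW guarantee of \cite{Bianchi2018Nonstochastic}, paying only a lower-order additive price for the phases in which the guessed delay $d^{(k)}=h(T^{(k)})$ underestimates the true delay $d$. First I would set $h^{-1}(d) \triangleq \min\{T : h(T)\ge d\}$ and the threshold index $k_0 \triangleq \min\{k : h(T^{(k)})\ge d\}$; since $h$ is increasing, phase $k$ is \emph{good} (i.e.\ $d^{(k)}\ge d$) exactly when $k\ge k_0$, and because the horizons double we have $T^{(k_0-1)} < h^{-1}(d)$, so all \emph{bad} phases together occupy only the first $T^{(k_0-1)} < h^{-1}(d)$ time steps (if the game ends before phase $k_0$, then trivially $Reg(T)\le T < h^{-1}(d)$). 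Next I would note that per-phase regrets add up: writing $G_i^{(k)}$ for the cumulative loss of arm $i$ within phase $k$, one has $\max_i\sum_k G_i^{(k)}\le\sum_k\max_i G_i^{(k)}$ and hence $Reg(T)\le\sum_k Reg^{(k)}$, where $Reg^{(k)}$ is the regret of phase $k$ treated as a standalone game of horizon $n_k$ (that phase's length). For a bad phase I would simply use $Reg^{(k)}\le n_k$, valid since $\|\bm{r}_i(t)\|_1\le 1$ puts both $\max_i G_i^{(k)}$ and the player's collected loss in $[0,n_k]$; summing over $k<k_0$ contributes at most $T^{(k_0-1)}\le h^{-1}(d)$.

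The heart of the proof is the good phases, and this is the step I expect to be the main obstacle. In phase $k\ge k_0$ the algorithm runs exactly CLW with parameters $\gamma^{(k)},q^{(k)},d^{(k)}$ built from the \emph{overestimate} $d^{(k)}\ge d$ and the known horizon $n_k=\Theta(T^{(k)})$, so I must check that CLW's guarantee is robust both to this overestimate and to the modified round-ending rule. For the overestimate: the delay parameter in CLW serves only to force two successive ``counted'' rounds to be separated by enough slots that the feedback assigned to one counted round is uncontaminated by arms pulled in the others; separating them by $d^{(k)}\ge d$ still achieves this, at the price of the bound scaling with $d^{(k)}$. For the modification: lines 10--13 of Algorithm \ref{Algorithm_Ext} are designed so that $\Pr[t\in\mathcal{U}^{(k)}]$ equals a fixed constant for every $t$ in the phase (not merely $\le$ it), which is exactly the property the analysis of \cite{Bianchi2018Nonstochastic} exploits when it multiplies this probability against per-slot quantities. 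Granting both, their argument carries over and yields $Reg^{(k)}=O\!\left(\sqrt{d^{(k)}(T^{(k)}+d^{(k)})\,N\log N}\right)=O\!\left(\sqrt{h(T^{(k)})\,T^{(k)}\,N\log N}\right)$, using $d^{(k)}=h(T^{(k)})<T^{(k)}$; the at most $O(d)$ reward mass that spills across each phase boundary only adds a lower-order term.

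It remains to sum over good phases. Since $h$ is increasing and the last target horizon satisfies $T^{(K^*)}\le 2T$, every $h(T^{(k)})\le h(2T)$, whence
\[
\sum_{k=k_0}^{K^*}\sqrt{h(T^{(k)})\,T^{(k)}\,N\log N} \;\le\; \sqrt{h(2T)\,N\log N}\sum_{k=k_0}^{K^*}\sqrt{T^{(k)}} \;=\; O\!\left(\sqrt{h(2T)\,T\,N\log N}\right),
\]
because $\sum_k\sqrt{T^{(k)}}$ is geometric with ratio $\sqrt{2}$ and so is dominated, up to a constant, by its last term $\sqrt{T^{(K^*)}}=O(\sqrt{T})$. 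Combining this with the bad-phase contribution $O(h^{-1}(d))$ gives $Reg(T)=O(\sqrt{T\,h(2T)\,N\log N}+h^{-1}(d))$, as claimed.
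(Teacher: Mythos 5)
Your proposal is correct and follows essentially the same route as the paper's proof: identify the first phase $k_0$ with $d^{(k)}\ge d$, charge the earlier phases trivially by their total length ($O(h^{-1}(d))$), invoke the CLW per-phase guarantee $O(\sqrt{d^{(k)}t^{(k)}N\log N})$ for the good phases, and sum the geometrically growing terms, which are dominated by the last phase with $d^{(K)}\le h(2T)$ and $t^{(K)}\le T$. Your added remarks on regret decomposability across phases and on the robustness of CLW to an overestimated delay and to the modified round-ending rule are just explicit versions of steps the paper leaves implicit or delegates to its cited Corollary 4.
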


We now compare the regret upper bounds in Theorem \ref{Theorem_EXP3} and \ref{Theorem_Ext}. 
The first term is almost the same. Notice that $g(k) = k^{\beta}$ implies $K = \Theta(T^{1\over \beta+1})$. Thus, the largest round size $g(K) = \Theta(T^{\beta\over \beta+1})$. If we set $h(T) = \Theta(T^{\beta\over \beta+1})$,  the largest round size is the same, implying that  the first terms in the two theorems are the same. 
The last term in Theorem \ref{Theorem_Ext} comes from the time steps during which we do not guess $d$ correctly, i.e., $d^{(k)} < d$ in the algorithm. This is different from the last term in Theorem \ref{Theorem_EXP3}, which comes from the bias in observations. 
In Algorithm \ref{Algorithm_Ext}, we use the partial information to avoid bias. As a result, such term does not exist in Algorithm \ref{Algorithm_Ext}.
Since the last term does not depend on $T$, our regret bound can be arbitrary close to the regret lower bound given in \cite{Bianchi2018Nonstochastic}, without using any prior information on $d$. 
In exchange, the increasing rate of the round size needs to be small, which causes a large regret before we have $d^{(k)} \ge d$.

\begin{proof}[Proof of Theorem \ref{Theorem_Ext}]:  
Let $k_0$ be the first round that $d^{(k_0)} \ge d$. Then, $k_0$ satisfies $h(T^{(k_0-1)}) < d$. Since $h$ in an increasing function,  $h^{-1}$ is also increasing. Thus, $T^{(k_0-1)} < h^{-1}(d)$, which implies $T^{(k_0)} < 2h^{-1}(d)$. 

We now ignore the phases  before $k_0$, as these time steps will have an additional regret of at most $2h^{-1}(d)$. 
Consider the phases with $d^{(k)} \ge d$. Notice that from Corollary 4 in \cite{Bianchi2018Nonstochastic},
each phase $k$ has a regret $O(\sqrt{d^{(k)}t^{(k)}N\log N})$, where $t^{(k)} = {T^{(k)} \over 2}$ is the number of time steps in this phase.
Then,  the total regret in these phases is upper bounded by: 
\begin{eqnarray*}
\sum_{k=k_0}^K O\left(\sqrt{d^{(k)}t^{(k)}N\log N}\right) = O\left(\sqrt{d^{(K)}t^{(K)}N\log N}\right), 
\end{eqnarray*} 
where $K$ is the last phase number.

From the description of Algorithm \ref{Algorithm_Ext}, we have that $t^{(K)} \le T$ and $d^{(K)} \le h(2T)$. Thus,  the total regret of Algorithm \ref{Algorithm_Ext} satisfies $Reg(T) \le O(\sqrt{Th(2T)N\log N} + h^{-1}(d))$.  
\end{proof}



%

\end{document}